\newtheorem{theorem}{Theorem}
\newtheorem{proposition}{Proposition}
\newtheorem{lemma}{Lemma}
\newtheorem{assumption}{Assumption}
\newtheorem{definition}{Definition}
\title{Multiple Domain Causal Networks}
\author{
Tianhui Zhou$^{1}$, William E. Carson IV$^{2}$, Michael Hunter Klein$^{3}$,  David Carlson$^{1,3,4}$\\
$^1$Department of Biostatistics and Bioinformatics\\
$^2$Department of Biomedical Engineering\\
$^3$Department of Electrical and Computer Engineering\\
$^4$Department of Civil and Environmental Engineering\\
Duke University \\
Durham, NC 27708 \\  
\texttt{\{tianhui.zhou, wec14, michael.klein413, david.carlson\}@duke.edu}
}
\begin{document}

\maketitle

\begin{abstract}

Observational studies are regarded as economic alternatives to randomized trials, often used in their stead to investigate and determine treatment efficacy. Due to lack of sample size, observational studies commonly combine data from multiple sources or different sites/centers. Despite the benefits of an increased sample size, a na\"ive combination of multicenter data may result in incongruities stemming from center-specific protocols for generating cohorts or reactions towards treatments distinct to a given center, among other things. These issues arise in a variety of other contexts, including capturing a treatment effect related to an individual's unique biological characteristics. Existing methods for estimating heterogeneous treatment effects have not adequately addressed the multicenter context, but rather treat it simply as a means to obtain sufficient sample size. Additionally, previous approaches to estimating treatment effects do not straightforwardly generalize to the multicenter design, especially when required to provide treatment insights for patients from a new, unobserved center. To address these shortcomings, we propose Multiple Domain Causal Networks (MDCN), an approach that simultaneously strengthens the information sharing between similar centers while addressing the selection bias in treatment assignment through learning of a new feature embedding. In empirical evaluations, MDCN is consistently more accurate when estimating the heterogeneous treatment effect in new centers compared to benchmarks that adjust solely based on treatment imbalance or general center differences. Finally, we justify our approach by providing theoretical analyses that demonstrate that MDCN improves on the generalization bound of the new, unobserved target center.

\end{abstract}

\section{Introduction}

\label{intro}


Recent advancements in deep learning have facilitated characterization and modeling of complex relationships \citep{reed1999neural,schmidhuber2015deep}. Accordingly, deep learning has since been applied to problems of personalized medicine, with the goal to accurately quantify each individual's heterogeneous response to a given treatment \citep{jain2002personalized}. Due to the high cost of conducting experiments, researchers often turn to observational data to find clues through building predictive models or making quantitative assessments \citep{deaton2010instruments}. To enlarge the overall sample size, data collected from multiple different centers (clusters) are combined. This practice is commonly referred to as a ``multicenter observational study'' \citep{ferrer2009effectiveness}. Na\"ively combining data from multiple sources or ignoring the underlying cluster structure can be problematic. For example, individuals from different centers could have distinctive feature distributions with minimum overlap. Moreover, the mechanisms underlying treatment selection bias may differ from center to center. The combination of these issues present a challenge to researchers attempting to quantitatively model the effects of treatment assignment of patients from a new center. Limited work has been conducted for the causal estimation in a multicenter setting \citep{suk2021random}, as the vast majority of existing approaches address only part of the problem. To bridge this gap, our goal is to design a framework that jointly addresses the structural considerations of multicenter data \textit{and} robust estimation of heterogeneous treatment effects for individuals from new centers.

Different centers or different treatment groups within a same center could mathematically be described as feature spaces $X\in \mathcal{X}$ of different distributions. This aligns with the definition of ``domain'' in the domain adaptation literature where the similarity among different domains are leveraged to improve a model's generalization \citep{ben2010theory}. Inspired by the idea, we propose Multiple Domain Causal Networks (MDCN): a novel approach that takes into account both \textit{selection bias in treatment assignment} and \textit{discrepancies inherent to data collected at different centers}. Succinctly, we employ domain adaptation to simultaneously increase the overlap between treatment groups and match centers with more similarities to gain robustness in inference. Moving forward, we will use ``domain'' to refer to a center or cluster from which data was collected. Not only can the proposed approach be applied to data collected from multiple medical centers, but it can also can be used in broader contexts (e.g., considering individuals as separate domains). For instance, we motivate our method using neural data collected from mice, where behavioral assays can be framed as treatment regimes and we expect heterogeneous treatment effects between individual mice. This approach is highly relevant to observational studies that evaluate the impact of pharmaceuticals \cite{drysdale2017resting} or other interventions based on brain measurements (e.g. neurostimulation \cite{de2015neurostimulation}). Our main contributions include the following:
\begin{enumerate}
    \item We formulate the heterogeneous treatment effect estimation problem under the context of multicenter observational studies.
    \item We propose a new representation learning approach that accounts for both domain-level discrepancies and selection biases in treatment assignment.
    \item We provide supporting theoretical proof demonstrating that the error on the new, unobserved target center is bounded with the proposed method.
\end{enumerate}

\section{Related Work}

To date, few works address out-of-domain treatment effect estimation, especially on multicenter data. One example is generalized mixed effect models that handle cluster-structured data derived from multiple domains \citep{duckworth2010establishing,mcgilchrist1994estimation}; however, this approach is an additive linear model that requires specific likelihood functions for its mean model and variance model, which restricts customization and is less capable of handling heterogeneous relationships. Instead, we will build on recent advances in using machine learning for causal estimation and in domain adaption, briefly described below.

There are many recent advances in machine learning for causal inference. A variety of methods have been used, including methods that match individuals \citep{awan2020almost,chang2017informative,rubin2000combining} or parts of the covariate space (e.g. tree-based methods \citep{athey2016recursive,hill2011bayesian}. A critical issue in causal inference is mismatch of the covariate space, motivating weighting methods \citep{assaad2020counterfactual,li2018balancing}). Rather than weighting all samples, one could disentangle the confounding effect from the feature space \citep{kuang2017treatment,zhang2012bias}, find a more balanced embedding between treatment groups \citep{shalit2017estimating}, or embed the treatment assignment mechanism in a new latent representation \citep{hassanpourvariational2020,shi2019adapting}. Despite demonstrable improvement in model generalization for causal inference, these models do not account for multiple domain setups and their extensions to out-of-domain prediction are not straightforward.

Domain adaptation methods provide elegant solutions to account for the inherent distribution shift across different domains \cite{azizzadenesheli2018regularized,Sun2016,Tachet2020,Zhang2013}. Many domain adaptation methods aim to find a latent variable representation where the distributions of multiple domains are matched \cite{pmlr-v97-zhao19a}. Matched latent domains can be learned by penalizing with statistical measures such as the Maximum Mean Discrepancy \cite{Borgwardt2006} or Wasserstein Distance \cite{Shen_Qu_Zhang_Yu_2018}, or by matching second order statistics \cite{SunCoral2016}. Another family of techniques make use of adversarial methods to make the latent sample's source domain identity indistinguishable \cite{Ganin2016,Hoffman2018CyCADACA,Li2019}. It has been shown that too strong of a latent domain matching penalty can harm performance, especially when source domains are unrelated to each other \cite{Mansour2008} or suffer from label shift \cite{li2019target}. To account for this issue, several techniques have been developed for relaxing the latent domain matching penalty using other objective loss functions \cite{Hoffman2018CyCADACA} or by explicitly re-weighting the importance of matching domains through Multiple Domain Matching Networks (MDMN) \cite{limdmn}. MDMN uses a Wasserstein distance approximation to dynamically re-weight the importance of domain matching depending on the proximity of domains, choosing to use only the most relevant source domains. While these methods leverage advanced domain adaptation techniques, it is crucial in our situation to also examine the within-domain differences in treatment assignment. 

In summary, existing work only partially solves the problem of predicting out-of-domain treatment effects with data from multiple sources.

\section{Problem Setup}
\label{sec:statement}


Suppose we have data from $S$ domains (centers). Let $\{1, \cdots, S-1\}$ represent source domains, and let $S$ represent the unobserved target domain.
We assume a binary treatment condition with label $T\in\{0,1\}$. The feature space and the potential outcome space are represented as $X\in\mathcal{X}\subset\mathbb{R}^{p_x}$ and $\{Y(0),Y(1)\} \in\mathcal{Y}\subset\mathbb{R}^{p_y}$, respectively. For any domain $s$, the selection bias in observational studies causes $X$ to have distinct distributions in the control group ($T=0$), treatment group ($T=1$), and overall domain.
\begin{equation*}
    \text{Control: } X|_{T=0} \sim D_{s,0}; ~\text{Treatment: } X|_{T=1} \sim D_{s,1};  ~\text{Overall: }X \sim D_{s}.
\end{equation*}
We use $D_{s,0}, D_{s,1}$, and $D_s$ to represent their distribution functions, and their relationship can be described by $(1-p_s^{T=1})D_{s,0}+p_s^{T=1}D_{s,1}=D_{s}$, with $p_s^{T=1}$ indicating the marginal probability of treatment assignment. Likewise, $\{g_{s,0},g_{s,1}\}$ represents the unknown outcome functions for $\{Y(0),Y(1)\}$.

The observable information of an individual data sample of any source domain $s$ is represented via a tuple, $\{T,X,Y=Y(T)\}$. Given a target domain $S$ with $X \sim D_S$, our goal is to accurately infer both potential outcomes, so that we can more accurately estimate treatment outcomes and therefore assign individuals to the treatments that are likely to benefit them most.

There are two obstacles that are likely to hurt generalization to the target domain. The first is selection bias, since treatment assignment is often not randomized in observational studies. As a result, the acquired samples on which the model is trained cannot objectively reflect the error on the full domain, as $D_{s,0} \ne D_{s,1} \ne D_s, \forall s$. The second consideration is domain-level discrepancies, which may inhibit the effects of reducing the error over the combined source domains. More formally, for different domains $a$ and $b$, we could describe the discrepancies as the shifts in feature space distributions and underlying outcome functions, represented as, $D_{a} \ne D_{b}$ and $\{g_{a,0},g_{a,1}\} \ne \{g_{b,0},g_{b,1}\}$, respectively. Here, we make similar assumptions to those made in the causal literature \citep{hernan2020causal,rosenbaum1983central}; however, we make modifications that account for the extension to our multi-domain scenario.

\begin{assumption}[Within-domain positivity] For domain $s$ where $X \sim D_s$, the probability of assignment to any treatment group is bounded away from zero: $0<Pr(T=1|X ,s)<1$.
\label{assum:positivity}
\end{assumption}
\begin{assumption}[Within-domain consistency]
For domain $s$, the observed outcome with assigned treatment is equal to its potential outcome: $Y|T,X\sim D_s=Y(T)|T,X\sim D_s$.
\label{assum:consistency}
\end{assumption}
\begin{assumption}[Within-domain ignorability]
For domain $s$, the potential outcomes are jointly independent of the treatment assignment conditional on $X$: $[Y(0), Y(1)] \perp T|X \sim D_s$.
\label{assum:ignorability}
\end{assumption}

Considering treatment assignment as probabilistic ensures that studying the treatment difference forms a meaningful target for each domain. The original assumption does not involve $s$. This could create a problem where the treatment assignment is probabilistic overall ($0<Pr(T=1|X)<1$) but deterministic at domain level ($Pr(T=1|X,s)=1$ or $Pr(T=1|X,s)=0$). Assumptions \ref{assum:consistency} and \ref{assum:ignorability} are now also stated \textit{with respects to each domain} to better account for differences at the domain level. As a result, individuals with identical features but from different domains may have treatment assignments and responses that are domain-dependent and thus different overall.

\section{Multiple Domain Causal Networks}
\label{sec:mdcn}


Multiple Domain Causal Networks (MDCN) is a novel framework that both addresses selection bias in treatment assignment and leverages similarity from different centers. MCDN has three main components, all implemented as neural networks: a feature embedding network $\phi: ~ \mathbb{R}^{p_x} \rightarrow \mathbb{R}^q$ that learns new representations, and two outcome networks $h_0: ~ \mathbb{R}^{q} \rightarrow \mathbb{R}$ and $h_1: ~ \mathbb{R}^{q} \rightarrow \mathbb{R}$ that infer potential outcomes. The roles of these three components are depicted in Figure \ref{fig:mdcnframe}. Using the taxonomy in \citet{kunzel2019metalearners}, our outcome networks can be considered as a ``T-learner''. 

First, the original feature space $X$ is embedded in $\phi(X)$. This embedding is incorporated into three loss terms, the purpose of each being to endow it with specific properties. Through $L(\phi,h_0,h_1)$, $\phi$ is injected with information predictive of the outcomes. $L_{BT}(\phi)$ and $L_{CD}(\phi)$ are two regularization terms. The former makes $\phi$ more robust against the selection bias between treatment groups, and the latter reduces the cross-domain differences so that generalization is improved by learning from examples from similar domains.
\begin{equation}
L_{all}(\phi,h_0,h_1)=L(\phi,h_0,h_1)+\alpha L_{BT}(\phi)+\beta L_{CD}(\phi)
\label{eq:fullloss}
\end{equation}

\begin{wrapfigure}{r}{0.30\textwidth}
\vspace{-.20cm}
  \begin{center}
    \includegraphics[width=0.30\textwidth]{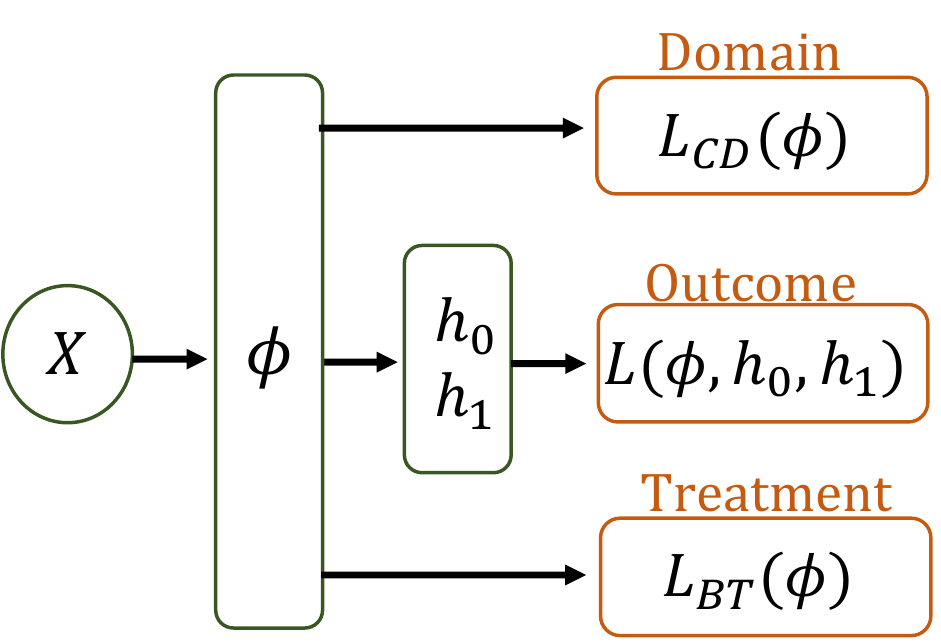}
  \end{center}
  \caption{MDCN, $\phi,h_0, h_1$ are optimized together with respective losses.}
  \vspace{-.03cm}
    \label{fig:mdcnframe}
\end{wrapfigure}
The full loss is summarized in \eqref{eq:fullloss}, with $\alpha$ and $\beta$ as tuning parameters.  In all our experiments, we fix $\alpha=$5e-4 and  $\beta$=1e-3, as chosen from the range of recommended values in \citet{shalit2017estimating} and \citet{limdmn}. Our model has empirically robust performance with these values, although they could be tuned in practice. We give the form of $L(\phi,h_0,h_1)$ below. The design of $L_{BT}$ and $L_{CD}$ are explained in Sections \ref{sec:lbt} and \ref{sec:lcd}. $L(\phi,h_0,h_1)$ trains the potential outcome models $h_0$ and $h_1$ to predict the observed outcomes given their assigned treatments. We use $l(\cdot,\cdot)$ to represent an arbitrary loss function, the form of which can be chosen based on the requirements of the modeling problem (e.g., cross-entropy loss for binary or categorical outcomes, squared loss for continuous outcomes).  \eqref{eq:outcome} represents the expectation of this loss as a summation over all source domains,
\begin{equation}
    L(\phi,h_0,h_1)=\sum\nolimits_{s=1}^{S-1} \biggl[E_{x\sim D_{s,0},y,t }l(h_0(\phi(x),y(t=0))+E_{x\sim D_{s,1},y,t}l(h_1(\phi(x),y(t=1))\biggl]
    \label{eq:outcome}
\end{equation}
We note that $h_0$ and $h_1$ are used to predict the potential outcomes universally across all domains. This strategy is more scalable compared to the alternative of learning a unique function for each domain. Additionally, we believe that when given enough capacity, $h_0$ and $h_1$ can still discern the differences at the domain level when combined with the extra properties endowed in $\phi$ learned via the losses $L_{BT}$ and $L_{CD}$. However, when we have a small number of domains, we could construct domain-specific $h_{0,s}$ and $h_{1,s}$. The prediction on the target domain could be based on a weighted summation of the outcome functions from its neighboring sources domains. We also view it as a future goal of varying the design of the potential outcome functions.
In Appendix \ref{app:model}, Algorithm \ref{alg} describes the pseudo-code for the full implementation.

\subsection{Between-Treatment Adjustment}
\label{sec:lbt}

Our approach to between-treatment adjustment is motivated by counterfactual regression (CFR) \citep{shalit2017estimating}. An embedding $\phi(x)$ with increased overlap between different treatment groups is learned to reduce the effects of selection bias, as supported by the theory of \citet{ben2010theory} that generalization between more similar spaces have a lower error bound.
To measure and encourage this overlap, we penalize with the Kantorovich-Rubinstein form of the Wasserstein-1 distance \citep{villani2008optimal}. 
\begin{equation}
\label{eq:wass}
    \sup_{\|{f_{bt}}\|_{L} \leq 1} \mathbb{E}_{x \sim {D}_{0}}[f_{bt}(\phi(x))]-\mathbb{E}_{x \sim {D}_{1}}[f_{bt}(\phi(x))]
\end{equation}
$D_0$ and $D_1$ in \eqref{eq:wass} characterize how $X$ is distributed on the treatment group and control group. $f_{bt}$ is drawn from the family of Lipschitz functions with constant 1, which can be approximated by neural networks with a gradient penalty \citep{gulrajani2017improved}. Through this objective, $\phi(x)$ will be encouraged to reduce the distance between-treatment groups. However, this only resolves the imbalance within a single domain. The direct application of CFR to balance all samples does not address the difficulties inherent to data collected from multiple domains. While overall balance is encouraged and enforced, the balancing of specific domains may be neglected. Moreover, the relative closeness of domains in the original space could also become distorted with this crude adjustment. As a modification, we adopt a separate ${f_{bt}^s}$ for each domain. This aims at reducing the distance between-treatment groups for \textit{all} domains. As ${f_{bt}^s}$ is domain-specific, the resulting adjustment is therefore also domain-specific. Additionally, there is less impact on cross-domain relationships. In Section \ref{sec:syn}, we provide visualizations to showcase the advantage of this approach over CFR. With this new strategy, the between group loss is,
\begin{equation}
\label{eq:bt}
    L_{BT}(\phi,f_{bt})=\sum\nolimits_{s=1}^{S-1} \biggl[ \sup _{\|{f_{bt}^s}\|_{L} \leq 1} \mathbb{E}_{x \sim {D}_{i,0}}[f_{bt}^s(\phi(x))]-\mathbb{E}_{x \sim {D}_{i,1}}[f_{bt}^s(\phi(x))] \biggl].
\end{equation}
We represent $L_{BT}(\phi,f_{cd})$ with notation $L_{BT}(\phi)$, as $f_{bt}$ is an auxiliary function to learn distances.
We combine $f_{bt}=\{f_{bt}^1,\cdots, f_{bt}^{S-1}\}$ using a single neural network with $S-1$ outputs, which saves computational cost and encourages information sharing between  domains, and use the methods of \citet{gulrajani2017improved} for learning.

\subsection{Cross-Domain Adjustment}


\label{sec:lcd}
\begin{wrapfigure}[17]{r}{0.31\textwidth}
  \begin{center}
  \vspace{-5mm}
    \includegraphics[width=0.31\textwidth]{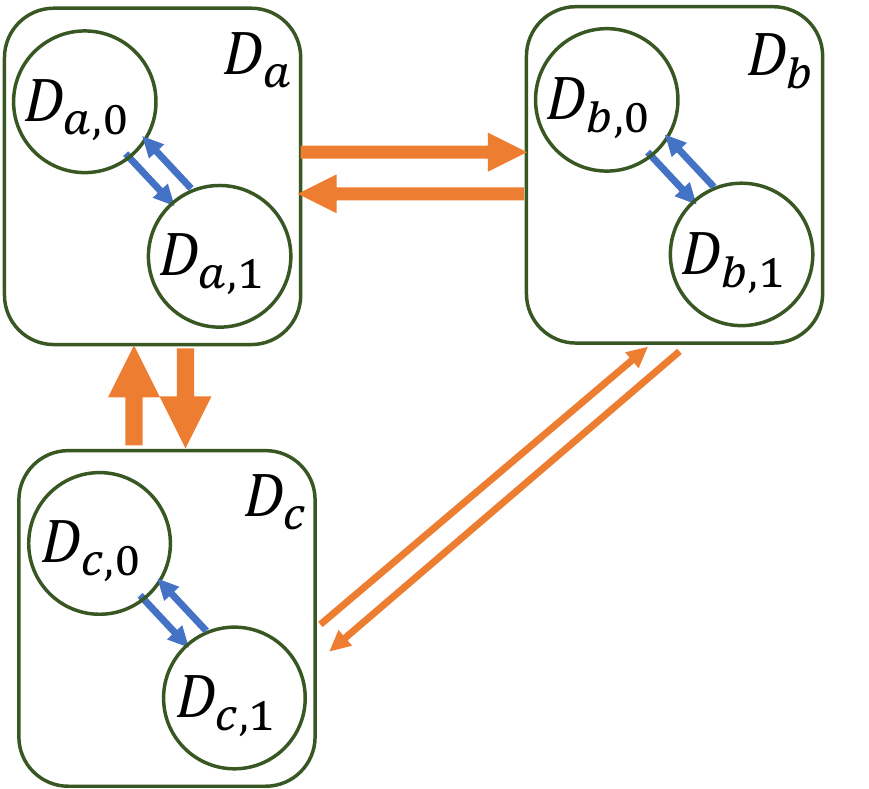}
  \end{center}
  \caption{Depiction of cross-domain adjustment with three domains. Wider arrows indicate larger weights/stronger relationships for similar domains.}
    \label{fig:3d}
    \vspace{-.6cm}
\end{wrapfigure}

We implement a cross-domain adjustment to encourage similarities at the domain level. This is achieved by extending the Multiple Domain Matching Network (MDMN) \cite{limdmn}. Akin to the between-treatment adjustment, we measure and reduce the distance based on the Wasserstein-1 distance. Instead of enforcing the distance penalty statically between any two domains, MDMN adopts a weighting scheme which assigns larger weight to more similar domains in \eqref{eq:sdom}. For domains that are far apart or dissimilar, forcefully reducing their differences may result in excessive loss of predictive information. As part of the predictive information is shared by all domains, the rest may be specific to only a few. The domain-specific information is reflected by these differences, which cannot be covered by the domain-invariant representation alone but is helpful in capturing the shift in outcome functions. Therefore, smaller weights are allocated in these instances. Ideally, each domain is matched to a few highly-correlated domains in space and forms clusters. For any domain $s$, this is formulated as,
\begin{equation}
\label{eq:sdom}
    d(D_{s},D_{/s})= \sup _{\|{f_{cd}^s}\|_{L} \leq 1} \mathbb{E}_{x \sim {D}_s}[f_{cd}^s(\phi(x))] -\sum\limits_{i\ne s} \mathbb{E}_{x \sim {D}_{i}}[w_{s,i}f_{cd}^s(\phi(x))].
\end{equation}
Specifically, the weight $w_{s,i}$ is calculated according to the pairwise differences between any two domains in \eqref{eq:rawdif}, and $\sum_{i\ne s}w_{s,i}=1$. While this distance is minimized by picking the single closest domain, we encourage robustness by smoothing over nearby domains.  
The weighted summation of all other domains can be seen as a pseudo domain, $D_{/s}$, with the well-defined distribution, $D_{/s}=\sum\nolimits_{i\ne s}w_{s,i}D_i$:
\begin{equation}
\label{eq:rawdif}
\begin{split}
&\boldsymbol{l}_s=\{l_{s,i}\}_{i\ne s},~
l_{s,i}=\mathbb{E}_{x \sim {D}_{s}}[f_{cd}^s(\phi(x))]-\mathbb{E}_{x \sim {D}_{i}}[f_{cd}^s(\phi(x))],\\
&\boldsymbol{w}_s=\{w_{s,i}\}_{i\ne s}=\text{softmax}(-\boldsymbol{l}_s).
\end{split}
\end{equation}
In practice, $\boldsymbol{l}_s$ is often accompanied with a temperature term to increase its stability. $\boldsymbol{l}_s$ at $t'$th iteration could be represented as: $\boldsymbol{l}_s^t=.9\boldsymbol{l}_s^{t-1}+.1\boldsymbol{l}_s^c$, with $\boldsymbol{l}_s^c$ being the estimates from the current batch \citep{limdmn}. Aggregating over all domains gives us the cross domain adjustment term:
\begin{equation}
\label{eq:cd}
L_{CD}(\phi,f_{cd})=\sum\limits_{s=1}^{S}\bigg[  \sup _{\|{f_{cd}^s}\|_{L} \leq 1} \mathbb{E}_{x \sim {D}_s}[f_{cd}^s(\phi(x))]-\sum\limits_{i\ne s} \mathbb{E}_{x \sim {D}_{i}}[w_{s,i}f_{cd}^s(\phi(x))] \bigg].
\end{equation}
Likewise, we can also use notation $L_{CD}(\phi)$ for $L_{CD}(\phi,f_{cd})$.
To control for the computational cost, we implement $f_{cd}=\{f_{cd}^1,\cdots, f_{cd}^{S}\}$ in the form of a multi-output function. However, due to the pairwise comparisons in weight calculation, the complexity $\mathcal{O}(S^2)$ is needed as $S$ increases. We train this loss and between-treatment adjustment simultaneously on $\phi$ again using the gradient penalties of \citet{gulrajani2017improved}, which is depicted in Figure \ref{fig:3d}.

\section{Theoretical Analysis}
\label{sec:theory}


Here, we provide theoretical analyses to explain the design of MDCN. The overarching goal is to bound the error on the unlabeled target domain via labeled source domain data. Full proofs for the theoretical results presented in this section can be found in Appendix \ref{asec:proof}. We give a few preliminaries below before presenting the final bound. To quantify the difference between hypothesis (or outcome) functions, we state the probabilistic discrepancy defined in \citet{limdmn}:
\begin{definition}
[Probabilistic discrepancy]
\label{def:pdis}
For two hypotheses $h$ and $h':\mathbb{R}^P\rightarrow \mathbb{R}$, their difference given a probabilistic distribution $D$ over $\mathcal{X}$ is defined as  $\gamma(h,h'|D)=\mathbb{E}_{x\sim D}|h(x)-h'(x)|$.
\end{definition}
Next, we define a family of our hypothesis functions in Definition \ref{def:lip}. We limit our proposed hypothesis class to the Lipschitz family with parameter $\lambda$. $\{h_0, h_1\} \in F_\lambda$.


\begin{definition}[Lipschitz continuity]
\label{def:lip}
A function $f :\mathbb{R}^P\rightarrow\mathbb{R}$ is Lipschitz continuous with parameter $\lambda$ if 
$|f(x_1)-f(x_2)|\leq \lambda ||x_1-x_2||_2$
holds for any vectors $x_1,x_2\in\mathcal{X}$. We denote the family as  $\mathcal{F}_\lambda$.
\end{definition}
We assume that the true outcome functions are also included in this family but with a different smoothness parameter $\lambda^*$,  $\{g_{s,0},g_{s,1}\}_{s=1}^{S} \in F_{\lambda^*}$. This encourages smooth transitions in the outcome labels with regards to changes in the feature space. Though smoothness may not hold in practice, Lipschitz continuous functions provide a reasonable approximation to a wide range of functions \citep{sohrab2003basic}. With all the preliminaries, we can present the overall bound on the target domain in Theorem \ref{thm:bound},
\begin{theorem}
\label{thm:bound}
For any positive weights $w=\{w_s\}_{s=1}^{S-1}$ with $\sum_{s=1}^{S-1} w_s=1$, the discrepancy between the true hypothesis functions $\{g_{0,S},g_{1,S}\}$ and the proposed hypothesis functions $\{h_0,h_1\}$ on target domain $S$ is bounded by,
 \begin{equation}
 \label{eq:bound}
\resizebox{.88\textwidth}{!}{$
 \begin{aligned}
      \gamma(h_0,g_{S,0}|D_S)+\gamma(h_1,g_{S,1}|D_S)&\le
     (\lambda+\lambda^*)[2W_1(D_S, \sum\nolimits_{s=1}^{S-1}w_{s}D_s)+\sum\nolimits_{s=1}^{S-1}w_{s}W_1(D_{s,0},D_{s,1})] \\
     &+
     \sum\nolimits_{s=1}^{S-1}w_{s}[\gamma(h_0,g_{s,0}|D_{s,0})+\gamma(h_1,g_{i,1}|D_{s,1})]+\gamma_0^*+\gamma_1^*.
 \end{aligned}
 $}
 \end{equation}
\end{theorem}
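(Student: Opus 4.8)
The plan is to derive \eqref{eq:bound} from three elementary properties of the probabilistic discrepancy $\gamma$ (Definition \ref{def:pdis}): the pointwise triangle inequality, linearity in the distribution argument, and a change-of-measure estimate controlled by $W_1$. Throughout, I would write $p_s$ for $p_s^{T=1}$ and $\bar D=\sum_{s=1}^{S-1}w_s D_s$, and use $D_s=(1-p_s)D_{s,0}+p_s D_{s,1}$.

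The one nontrivial ingredient is a change-of-measure lemma. If $h\in\mathcal{F}_\lambda$ and $g\in\mathcal{F}_{\lambda^*}$ then $x\mapsto|h(x)-g(x)|$ is Lipschitz with constant $\lambda+\lambda^*$, so $\tfrac{1}{\lambda+\lambda^*}|h-g|$ lies in the unit-Lipschitz ball; the Kantorovich--Rubinstein dual form of $W_1$ (the same one used in \eqref{eq:wass}) then gives, for any distributions $D,D'$,
\[
\bigl|\gamma(h,g\,|\,D)-\gamma(h,g\,|\,D')\bigr|\ \le\ (\lambda+\lambda^*)\,W_1(D,D').
\]
Combined with $\gamma(h,g\,|\,D)\le\gamma(h,g'\,|\,D)+\gamma(g',g\,|\,D)$ (from $|h-g|\le|h-g'|+|g'-g|$) and $\gamma(h,g\,|\,\bar D)=\sum_s w_s\gamma(h,g\,|\,D_s)$, these are all the tools needed; the Lipschitz hypotheses $\{h_0,h_1\}\in\mathcal F_\lambda$, $\{g_{s,0},g_{s,1}\}\in\mathcal F_{\lambda^*}$ are exactly what make the lemma applicable.

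I would then chain these inequalities, handling the $h_0$ term and the (symmetric) $h_1$ term in parallel. First I would move the target error onto $\bar D$: $\gamma(h_0,g_{S,0}\,|\,D_S)\le\sum_s w_s\gamma(h_0,g_{S,0}\,|\,D_s)+(\lambda+\lambda^*)W_1(D_S,\bar D)$. Within each source I would insert the source outcome function, $\gamma(h_0,g_{S,0}\,|\,D_s)\le\gamma(h_0,g_{s,0}\,|\,D_s)+\gamma(g_{s,0},g_{S,0}\,|\,D_s)$, and expand the first term by the mixture identity, $\gamma(h_0,g_{s,0}\,|\,D_s)=(1-p_s)\gamma(h_0,g_{s,0}\,|\,D_{s,0})+p_s\gamma(h_0,g_{s,0}\,|\,D_{s,1})$, pushing the treatment-arm piece back onto the control arm by the lemma to get $\gamma(h_0,g_{s,0}\,|\,D_s)\le\gamma(h_0,g_{s,0}\,|\,D_{s,0})+p_s(\lambda+\lambda^*)W_1(D_{s,0},D_{s,1})$. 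Doing the symmetric thing for $h_1$ but routing its error \emph{through the treatment arm} $D_{s,1}$ yields the complementary coefficient $(1-p_s)$ on the same $W_1(D_{s,0},D_{s,1})$. Summing the two and then over $s$ with weights $w_s$, the probabilities $p_s$ and $1-p_s$ add to $1$, the two copies of $(\lambda+\lambda^*)W_1(D_S,\bar D)$ give the factor $2$, and setting $\gamma_0^*:=\sum_s w_s\gamma(g_{s,0},g_{S,0}\,|\,D_s)$ and $\gamma_1^*:=\sum_s w_s\gamma(g_{s,1},g_{S,1}\,|\,D_s)$ recovers \eqref{eq:bound}.

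The inequalities themselves are routine; the step that needs care is the coefficient bookkeeping — in particular, noticing that the selection-bias correction should push the $h_0$ error onto the control distribution and the $h_1$ error onto the treatment distribution, so that the mixing weights $p_s$ and $1-p_s$ collapse into the single within-domain term $\sum_s w_s W_1(D_{s,0},D_{s,1})$ rather than doubling it. It is also worth stating explicitly that $\gamma_0^*$ and $\gamma_1^*$ are irreducible: they quantify genuine disagreement of the true outcome functions across domains and cannot be removed by $\phi$ or by the choice of $w$, whereas the $W_1$ terms and the source errors are exactly what $L_{BT}$, $L_{CD}$, and $L(\phi,h_0,h_1)$ are built to shrink. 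Finally, since $h_0,h_1$ act on the embedding, the distributions in \eqref{eq:bound} are read as the pushforwards of $D_S,D_s,D_{s,0},D_{s,1}$ through $\phi$, which is precisely the space in which \eqref{eq:bt} and \eqref{eq:cd} operate.
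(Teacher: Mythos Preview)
Your argument is correct, and the within-domain (between-treatment) step is exactly the paper's Proposition~S1. The cross-domain step, however, takes a genuinely different route. The paper introduces an auxiliary pivot hypothesis $h_0^*\in\mathcal{F}_{\lambda^*}$ that minimizes $\gamma(h_0,g_{S,0}\mid D_S)+\sum_s w_s\,\gamma(h_0,g_{s,0}\mid D_s)$, \emph{defines} $\gamma_0^*$ to be that minimum value, and runs the triangle inequality through $h_0^*$ before applying the Kantorovich--Rubinstein bound to $|h_0-h_0^*|$. You instead insert the source truth $g_{s,0}$ directly and set $\gamma_0^*:=\sum_s w_s\,\gamma(g_{s,0},g_{S,0}\mid D_s)$. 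Your route is more elementary---no minimizer is needed, and the irreducible term has a transparent interpretation as the weighted disagreement of the true outcome functions---but your $\gamma_0^*$ is never smaller than the paper's: plugging $h_0=g_{S,0}\in\mathcal{F}_{\lambda^*}$ into the paper's minimization returns exactly your quantity, so the paper's bound is at least as tight. Since the theorem statement leaves $\gamma_0^*,\gamma_1^*$ undefined in the main text (they are specified only in the appendix via \eqref{eq:gammamin}), your reading is reasonable, but you should be aware that it does not coincide with the paper's definition and yields a formally weaker inequality.
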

$[\gamma(h_0,g_{s,0}|D_{s,0})+\gamma(h_1,g_{i,1}|D_{s,1})]$ represents the probabilistic discrepancy between $\{h_0, h_1\}$ and the true outcome functions $\{g_{s,0}, g_{s,1}\}$ on the observable part of the data: $D_{s,0}, D_{s,1}$. Minimizing this corresponds to $L(\phi,h_0,h_1)$ in \eqref{eq:outcome}. In practice, we could modify the discrepancy mildly such as by optimizing the squared loss in regressions instead for more stability.
$W_1(D_{s,0},D_{s,1})$ measures the distance between the treatment space and the control space in domain $s$, and it matches with the structure of $L_{BC}(\phi)$. Likewise, $L_{CD}(\phi)$ is inspired by 
$W_1(D_S, \sum\nolimits_{s=1}^{S-1}w_{s}D_s)$. Though the bound is dependent on the values of $\lambda$ or $\lambda^*$, their explicit values are not needed in practice, as they can be incorporated into tuning parameters $\alpha$ and $\beta$. Lastly, $\gamma_0^*+\gamma_1^*$ depicts the fundamental difference in the true outcome functions $\{g_{s,0},g_{s,1}\}_{s=1}^{S}$ across domains, and cannot be optimized. A large $\gamma_0^*+\gamma_1^*$ signals the existence of a huge shift in outcome functions. In that case, we are less guaranteed to achieve accurate predictions on the target domain. The weight $w=\{w_s\}_{s=1}^{S-1}$ also plays a key role in this bound. In \eqref{eq:bound}, increased emphasis is placed on source domains that are close to the target domain. Samples from these similar source domains may serve as good reference cases and provide valuable insights in improving the performance on the target domain.

\section{Experiments}
\label{sec:exp}






We use two examples to demonstrate how MDCN improves estimation of heterogeneous treatment effects on the target domain. Efficacy is evaluated according to estimation of the conditional average treatment effect (CATE) \citep{shalit2017estimating}. We modify CATE by conditioning the quality on the domain label $S$ to take into consideration the shift in outcome functions:
\begin{equation}
\label{eq:CATE}
    \tau(x,S)=\mathbb{E}[Y(1)|x,S]-\mathbb{E}[Y(0)|x,S].
\end{equation}
We use the precision in estimation of heterogeneous treatment effects (PEHE) from \citet{hill2011bayesian} to measure the distance between CATE and its estimates: $ \text{PEHE(S)}=\sqrt{\mathbb{E}_{x\sim D_s}[\tau(x,S)-\hat{\tau}(x,S)]^2}$. As evaluating CATE requires access to ground truth values, we create synthetic and semi-synthetic data for this purpose. Causal forest (CF) is included as a benchmark due to its previous application to multicenter observational study data \cite{athey2016recursive,suk2021random}. Other benchmarks include different variants of MDCN, which also serves as an ablation study that helps us understand the benefits of different components of MDCN. Apart from CF, we note that all variants have identical architectures for the outcome model and feature embedding model to ensure a fair comparison. CFR is included as a na\"ive benchmark that only addresses treatment group imbalance globally \cite{shalit2017estimating}. The proposed between-treatment adjustment $L_{BT}(\phi)$ in \eqref{eq:bt} upgrades CFR by performing the novel adjustment \textit{within} each domain, which we call the domain CFR (DCFR). Performance of MDMN is also compared to illustrate the inadequacy of applying the domain-level adjustment alone. Lastly, we combine MDMN and CFR to form MDMNCFR. Though it adjusts at both cross-domain and between-treatment levels, its design is inferior to MDCN which is rigorously backed by supporting theories. Additional details can be found in Appendix \ref{app:model}.

\subsection{Synthetic Circular Data}
\label{sec:syn}

In this section, we create a synthetic case with shifts in both feature spaces and outcome functions across domains. We define the dimension of the feature space $\mathcal{X} \in \mathbb{R}^3$ and the number of domains $S=10$. These domains are labeled by $s\in\{0,\cdots, 9\}$. Any domain $s$ versus the rest of the combined domains can be regarded as the setup for source versus target domains. To simulate the data, we first use $s$ to create an angle parameter: $\angle_s=s\times\pi/10$, which evenly splits a circle. Through $\angle_s$, we vary domains and outcome functions with the following procedures.
\begin{equation*}
\begin{aligned}
&\text{Domain shift: } & X|s=\{x_1,x_2,x_3\} \sim \mathcal{N}([4\sin(\angle_s),4\cos(\angle_s),0]',\mathbb{I}_3); \\
&\text{Potential outcomes: } &Y(0)|X,s \sim \mathcal{N}(1.5[\sin(x_1+\angle_s)+\cos(x_2+x_3+\angle_s)],1),\\
&& Y(1)|X,s \sim \mathcal{N}(1.5[\cos(x_1+\angle_s)+\sin(x_2+x_3+\angle_s)],1);
\\
&\text{Treatment Assignment :}& \mathbb{P}(T=1|X)=1/[1+\exp(-0.5x_1-0.5x_2-2x_3)].
\end{aligned}
\end{equation*}
For source domains, the observed outcomes correspond to the outcomes with assigned treatments $Y=Y(0)(1-T)+Y(1)T$. We simulate 2,000 samples for each domain and repeat this procedure 10 times for variability assessment. The distributions for 10 domains in their first two dimensions are visualized in Figure \ref{fig:10dom}. The location shift of 10 domains forms a circle, and each domain $s$ has 2 close neighbors on its two sides with labels: $s-1\pmod{10}$ and  $s+1\pmod{10}$.

\begin{figure*}[t]
\centering
 \subfigure[Visualization of 10 Domains]{\label{fig:10dom}
    \includegraphics[height=.3\linewidth]{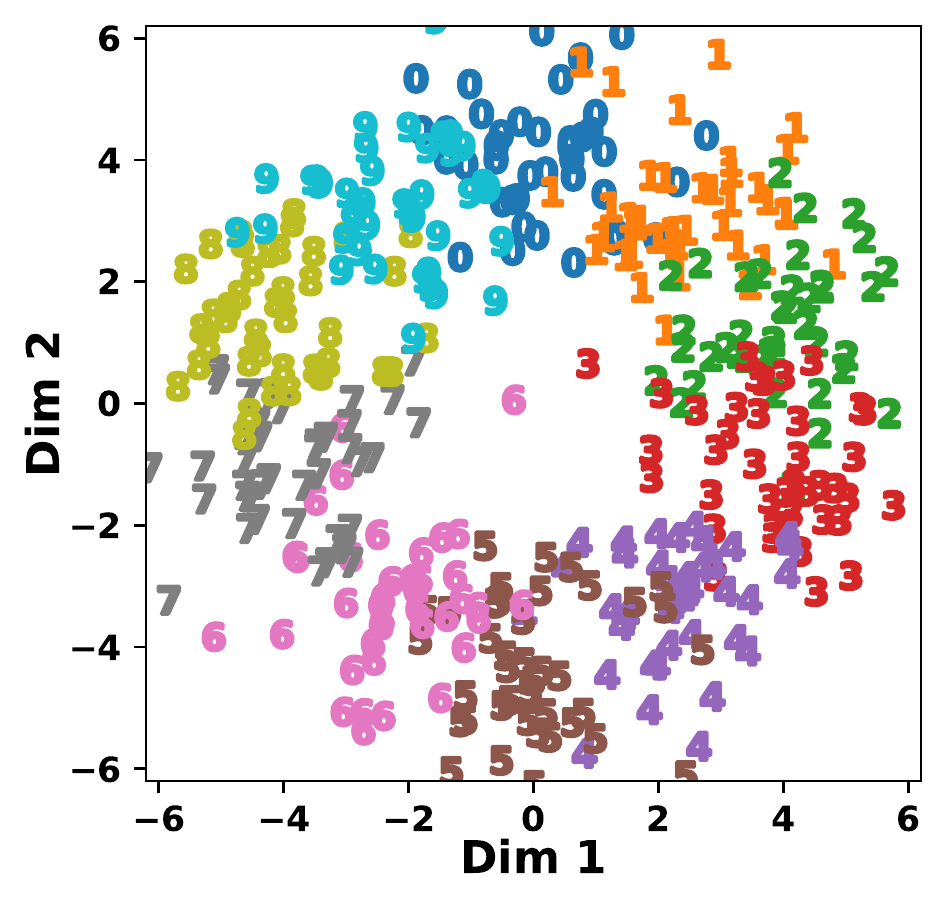}}
    \subfigure[CFR Domain Similarity]{\label{fig:cfr}
    \includegraphics[height=.3\linewidth]{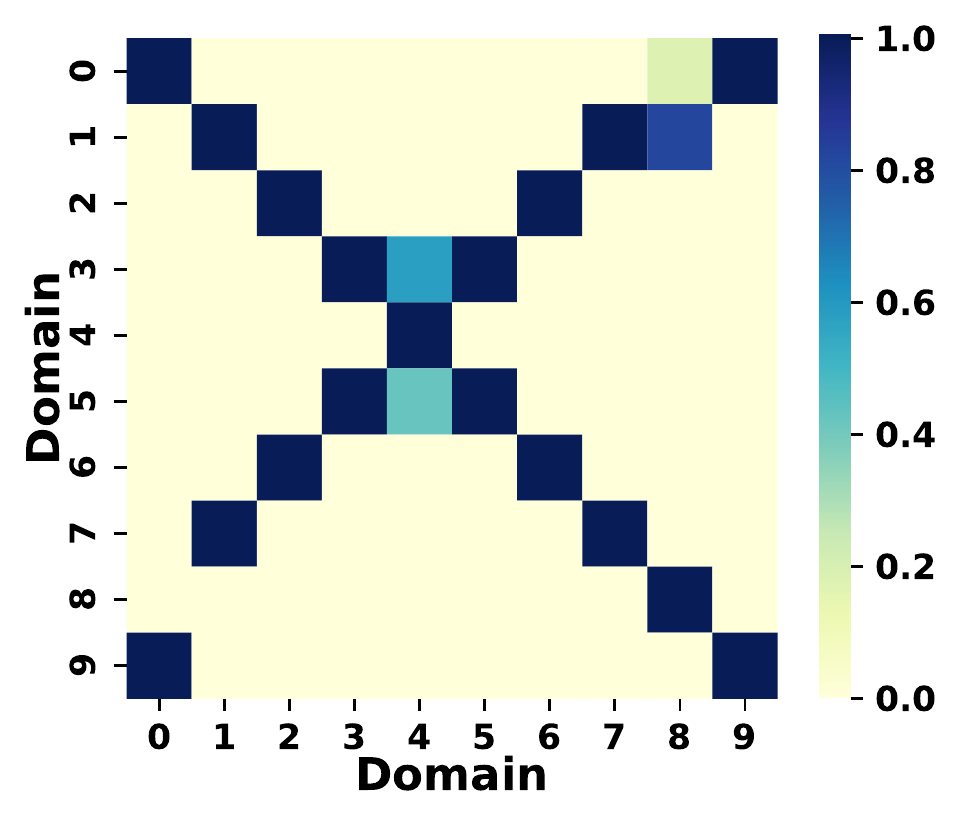}}
    \subfigure[MDMN Domain Similarity]{\label{fig:dmdcn}
    \includegraphics[height=.3\linewidth]{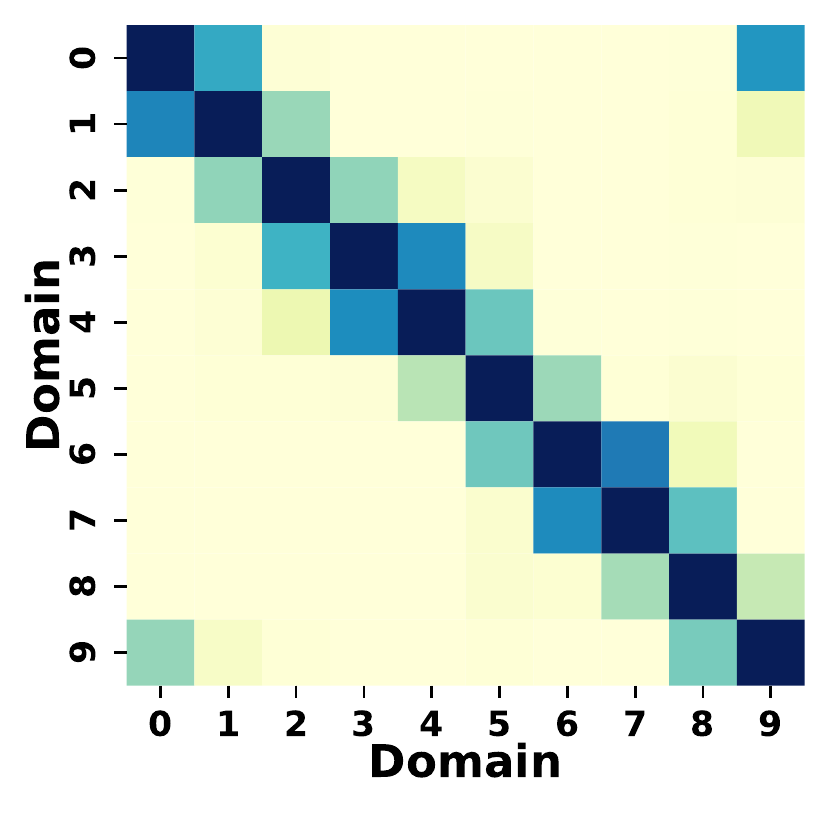}}
\caption{
\label{fig:domain_rela}
\ref{fig:10dom} visualizes the locations of the 10 domains. \ref{fig:cfr} and \ref{fig:dmdcn} are the heat maps measuring the similarity between domains. For MDMN-based approaches (MDCN included), we use the learned weight $(w_{s',s}+w_{s,s'})/2$ to fill in the grid$_{s,s'}$. For other methods, we use $\phi(x)$'s 1d t-SNE \citep{maaten2008visualizing} to estimate the distance $\boldsymbol{l_s}$ and weight $\boldsymbol{w_s}$ in \eqref{eq:rawdif} to plot the similarity (details in Appendix \ref{app:vis}). Darker colors indicate more similarity. Ideally, the whole diagonal and two off-diagonal corners should be dark with more decaying for grids further away from them. CFR strongly connects some distant domains, such as 1 and 7 or 2 and 6. Essentially, it reduces the distances in the diagonal to reduce the treatment group imbalance marginally, which contradicts with how similarity is shared between near domains. MDCN fully recovers the true similarity. Other methods are visualized in Figure \ref{fig:domain_rela2}.} 
\end{figure*}

\begin{table*}[t]
\centering
\caption{PEHE on Circular Data. MDMN is the most competitive of all benchmarks. Adjusting treatment group imbalance within each domain (DCFR) consistently outperforms CFR in all domains. Wilcoxon signed-rank test is used to report the significance between our newly proposed approaches and CFR. $P$ value $< 5 \cdot 10^{-2}$ ($\dagger$), $P$ value < $1 \cdot 10 ^{-4}$ ($\dagger \dagger$).}
\label{tab:pehe}
\resizebox{!}{0.145\textwidth}{
\begin{tabular}{l|c|c|c|c||c|c|c|}
\toprule
 {Target Domain} & {CF}&{MLP}&{CFR}& {MDMN}&{MDMNCFR$^{\dagger\dagger}$}&{DCFR$^{\dagger\dagger}$}&{MDCN$^{\dagger\dagger}$}
 \\\midrule   
0 & \textbf{1.24 $\pm$ .02} & 1.69 $\pm$ .08 & 1.59 $\pm$ .07 & 1.31 $\pm$ .05 & 1.33 $\pm$ .05 & 1.37 $\pm$ .07 & 1.28 $\pm$ .07 \\
1 & 2.55 $\pm$ .03 & 2.01 $\pm$ .05 & 1.99 $\pm$ .08 & 1.42 $\pm$ .04 & \textbf{1.34 $\pm$ .04} & 1.48 $\pm$ .09 & 1.37 $\pm$ .04 \\
2 & 1.43 $\pm$ .03 & 1.33 $\pm$ .06 & 1.32 $\pm$ .03 & \textbf{1.16 $\pm$ .04} & 1.17 $\pm$ .05 & 1.19 $\pm$ .03 & 1.18 $\pm$ .04 \\
3 & 1.44 $\pm$ .03 & 1.27 $\pm$ .04 & 1.25 $\pm$ .03 & 1.01 $\pm$ .04 & 1.00 $\pm$ .03 & 1.02 $\pm$ .02 & \textbf{0.99 $\pm$ .03} \\
4 & 1.45 $\pm$ .01 & 1.31 $\pm$ .06 & 1.29 $\pm$ .05 & 1.20 $\pm$ .02 & 1.23 $\pm$ .03 & 1.19 $\pm$ .04 & \textbf{1.16 $\pm$ .03} \\
5 & 1.52 $\pm$ .02 & 1.34 $\pm$ .07 & 1.25 $\pm$ .05 & 1.28 $\pm$ .05 & 1.31 $\pm$ .08 & 1.25 $\pm$ .06 & \textbf{1.24 $\pm$ .07} \\
6 & 1.41 $\pm$ .03 & 1.47 $\pm$ .07 & 1.32 $\pm$ .06 & 1.34 $\pm$ .07 & 1.26 $\pm$ .02 & \textbf{1.25 $\pm$ .03} & 1.32 $\pm$ .05 \\
7 & 2.20 $\pm$ .02 & 1.82 $\pm$ .08 & 1.87 $\pm$ .12 & 1.64 $\pm$ .10 & 1.80 $\pm$ .07 & \textbf{1.49 $\pm$ .07} & 1.60 $\pm$ .06 \\
8 & 1.69 $\pm$ .03 & 2.15 $\pm$ .05 & 2.10 $\pm$ .07 & 1.69 $\pm$ .04 & 1.71 $\pm$ .04 & 1.65 $\pm$ .07 & \textbf{1.62 $\pm$ .04} \\
9 & \textbf{1.49 $\pm$ .03} & 2.17 $\pm$ .07 & 2.31 $\pm$ .08 & 1.65 $\pm$ .06 & 1.59 $\pm$ .08 & 1.61 $\pm$ .09 & \textbf{1.49 $\pm$ .06} \\
Overall & 1.64 $\pm$ .04 & 1.66 $\pm$ .04 & 1.63 $\pm$ .04 & 1.37 $\pm$ .03 & 1.37 $\pm$ .03 & 1.35 $\pm$ .03 & \textbf{1.33 $\pm$ .03} \\
   \bottomrule
\end{tabular}
}
\end{table*}

The results on PEHE are summarized in Table \ref{tab:pehe}. For lower dimensional manifolds, tree-based approaches are advantageous as samples can easily saturate the space. However, CF lacks the structure to address the shifts in domains and outcome functions, so it only slightly outperforms MLP. Compared with MLP, other methods with adjustments demonstrate clear improvements in accuracy. DCFR minimizes the treatment imbalance within each domain instead of marginally like CFR. This lessens the impact of domain-level relationships, thus resulting in better performance. This result is visualized in Figure \ref{fig:cfr}, as CFR distorts some of the cross-domain similarities to alleviate the marginal treatment imbalance (e.g., regarding domain 2 and 6 to be closely connected). DCFR visualized in \ref{fig:domain_rela2} better maintains the original domain-level connectivity. MDCM in Figure \ref{fig:tmdcn} can clearly recover the true similarity.
The combination of CFR and MDMN (MDNMCFR) does not improve MDMN. MDCN has a clear advantage over all other benchmarks, achieving top marks in 5 out of 10 domains as well as  best overall performance.

\subsection{Semi-Synthetic Mouse Data}
\label{sec:micedt}

In this experiment, we make use of a mouse local field potential (LFP) dataset collected in \citet{Gallagher2017}. These data were collected from 21 mice in accordance with guidelines provided by the Institutional Animal Care and Use Committee (IACUC). LFPs were recorded while each mouse was exposed to two conditions: ``home cage'' (resting state, low stress) and ``open field'' (mild increased stress) \citep{bailey2009anxiety}. We use these two states to mimic the control (home cage) and treatment (open field) groups, thus providing an analog to multicenter observational study data. Individual mice, similarly to individual sites, are expected to demonstrate heterogeneity. We define an observation as each 1 second-long window of brain signal recordings which is associated with a continuous outcome, such as a behavioral outcome. Each mouse has less than 500 observations for each state.  This treatment structure mimics approaches in closed-loop brain stimulation, which chooses treatments based on current neural representations, and is currently in use in depression \cite{scangos2021closed} and seizure prevention \cite{stanslaski2012design}. 

\begin{table*}[t]
\centering
\caption{PEHE on Mice Data. MDCN gives the overall best performance, and is the best approach in 9 out of 21 domains. $P$ value $< 5 \cdot 10^{-2}$ ($\dagger$), $P$ value < $1 \cdot 10 ^{-4}$ ($\dagger \dagger$).}
\label{tab:pehe1}
\resizebox{!}{0.25\textwidth}{
\begin{tabular}{l|c|c|c|c||c|c|c|}
\toprule
 {Target Domain} &{CF}& {MLP}&{CFR}& {MDMN}&{MDMNCFR$^{\dagger}$}&{DCFR$^{\dagger\dagger}$}&{MDCN$^{\dagger\dagger}$}
 \\\midrule   
0 & 7.64 $\pm$ .07 & 1.61 $\pm$ .02 & 1.50 $\pm$ .02 & 1.52 $\pm$ .05 & 1.56 $\pm$ .04 & \textbf{1.45 $\pm$ .06} & 1.52 $\pm$ .06 \\
1 & 3.73 $\pm$ .06 & 4.21 $\pm$ .12 & 4.12 $\pm$ .12 &  3.31 $\pm$ .19 & 3.35 $\pm$ .09 & \textbf{3.24 $\pm$ .12} & 3.45 $\pm$ .08 \\
2 & 5.17 $\pm$ .04 & 5.42 $\pm$ .11 & 4.94 $\pm$ .13 & 3.29 $\pm$ .16 & 3.95 $\pm$ .26 & 3.70 $\pm$ .20 & \textbf{3.15 $\pm$ .23} \\
3 & 6.92 $\pm$ .04 & \textbf{2.07 $\pm$ .05} & 2.11 $\pm$ .05 & 2.01 $\pm$ .07 & 2.16 $\pm$ .05 & 2.10 $\pm$ .06 & 2.17 $\pm$ .07 \\
4 & 4.09 $\pm$ .12 & 2.61 $\pm$ .07 & \textbf{2.47 $\pm$ .06} & 2.57 $\pm$ .08 & 2.68 $\pm$ .10 & 2.65 $\pm$ .08 & 2.63 $\pm$ .07 \\
5 & 6.68 $\pm$ .04 & 1.49 $\pm$ .12 & 1.37 $\pm$ .08 & 1.39 $\pm$ .03 & 1.34 $\pm$ .05 & \textbf{1.31 $\pm$ .04} & 1.40 $\pm$ .07 \\
6 & 3.77 $\pm$ .05 & \textbf{2.19 $\pm$ .06} & 2.40 $\pm$ .14 & 2.26 $\pm$ .06 & 2.26 $\pm$ .06 & 2.33 $\pm$ .08 & 2.26 $\pm$ .08 \\
7 & 6.06 $\pm$ .21 & \textbf{4.85 $\pm$ .08} & 5.11 $\pm$ .07 & 5.20 $\pm$ .08 & 5.22 $\pm$ .10 & 5.27 $\pm$ .09 & 5.11 $\pm$ .05 \\
8 & 2.84 $\pm$ .05 & 2.27 $\pm$ .16 & 2.23 $\pm$ .10 & 1.71 $\pm$ .11 & 1.87 $\pm$ .17 & 1.73 $\pm$ .10 & \textbf{1.70 $\pm$ .09} \\
9 & 2.53 $\pm$ .11 & 1.88 $\pm$ .16 & 1.76 $\pm$ .15 & 2.04 $\pm$ .22 & 1.57 $\pm$ .14 & 1.48 $\pm$ .14 & \textbf{1.31 $\pm$ .07} \\
10 & 4.54 $\pm$ .25 & 1.72 $\pm$ .10 & 1.50 $\pm$ .08 &  1.46 $\pm$ .05 & 1.50 $\pm$ .03 & \textbf{1.40 $\pm$ .06} & \text{1.41 $\pm$ .03} \\
11 & 7.54 $\pm$ .16 & 1.66 $\pm$ .03 & 1.72 $\pm$ .08 & 1.44 $\pm$ .07 & 1.41 $\pm$ .05 & 1.38 $\pm$ .06 & \textbf{1.24 $\pm$ .04} \\
12 & \textbf{2.70 $\pm$ .14} & 2.81 $\pm$ .13 & 2.79 $\pm$ .08 & 2.90 $\pm$ .11 & 2.99 $\pm$ .08 & 2.87 $\pm$ .09 & 2.96 $\pm$ .10 \\
13 & 2.60 $\pm$ .06 & 1.67 $\pm$ .19 & \textbf{1.56 $\pm$ .12} & 1.98  $\pm$ .12 & 2.26 $\pm$ .18 & 1.91 $\pm$ .15 & 2.23 $\pm$ .24 \\
14 & 4.40 $\pm$ .42 & 1.48 $\pm$ .02 & \textbf{1.38 $\pm$ .02} & 1.39 $\pm$ .02 & 1.47 $\pm$ .02 & 1.43 $\pm$ .03 & \text{1.43 $\pm$ .03} \\
15 & 2.49 $\pm$ .15 & 3.20 $\pm$ .15 & 2.95 $\pm$ .13 & 2.38 $\pm$ .11 & \textbf{2.37 $\pm$ .10} & {2.47 $\pm$ .14} & 2.54 $\pm$ .11 \\
16 & 3.62 $\pm$ .17 & 2.66 $\pm$ .03 & 2.51 $\pm$ .13 & 2.60 $\pm$ .17 & 2.34 $\pm$ .09 & 2.46 $\pm$ .12 & \textbf{2.21 $\pm$ .09} \\
17 & 2.66 $\pm$ .44 & 1.68 $\pm$ .03 & 1.57 $\pm$ .02 & 1.59 $\pm$ .05 & \textbf{1.51 $\pm$ .02} & 1.55 $\pm$ .05 & \textbf{1.51 $\pm$ .03} \\
18 & 5.84 $\pm$ .17 & 6.14 $\pm$ .20 & 5.95 $\pm$ .16 & 4.90 $\pm$ .34 & 5.12 $\pm$ .23 & 5.17 $\pm$ .38 & \textbf{4.75 $\pm$ .27} \\
19 & 4.42 $\pm$ .24 & 2.03 $\pm$ .07 & 1.81 $\pm$ .07 & 1.41 $\pm$ .05 & 1.41 $\pm$ .05 & 1.38 $\pm$ .03 & \textbf{1.37 $\pm$ .05} \\
20 & 2.29 $\pm$ .15 & 2.49 $\pm$ .09 & 2.36 $\pm$ .12 & 2.38 $\pm$ .10 & \textbf{2.28 $\pm$ .09} & 2.29 $\pm$ .08 & \textbf{2.28 $\pm$ .09} \\
overall & 4.41 $\pm$ .13 & 2.77 $\pm$ .09 & 2.58 $\pm$ .09 & 2.37 $\pm$ .08 & 2.41 $\pm$ .08 & 2.36 $\pm$ .08 & \textbf{2.32 $\pm$ .07}\\
   \bottomrule
\end{tabular}
}
\end{table*}

\begin{figure*}[t]
\centering
 \subfigure[DCFR]{\label{fig:tdcfr}
    \includegraphics[height=.31\linewidth]{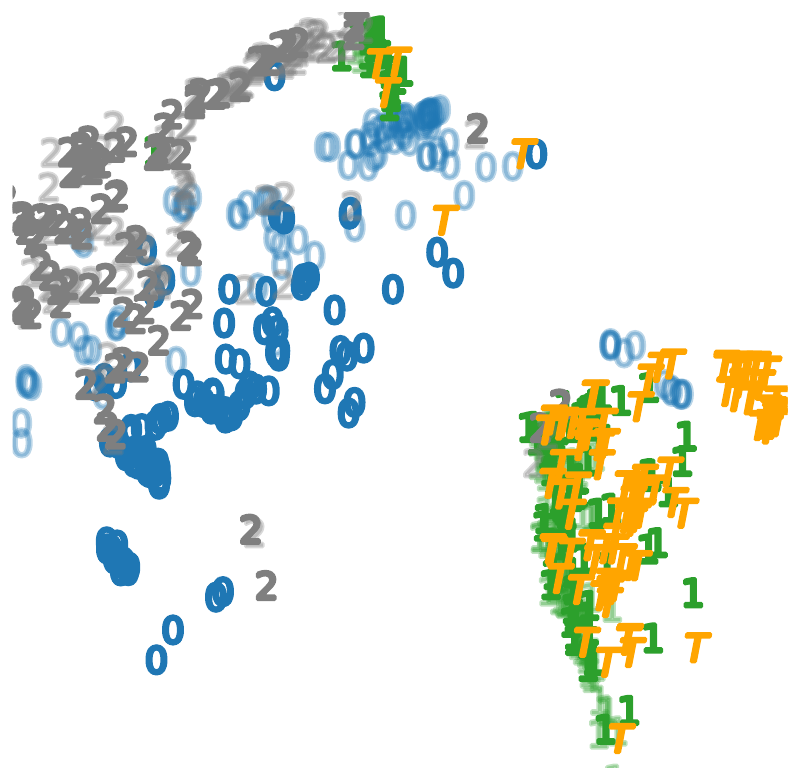}}
    \subfigure[MDMN]{\label{fig:tmdmn}
    \includegraphics[height=.31\linewidth]{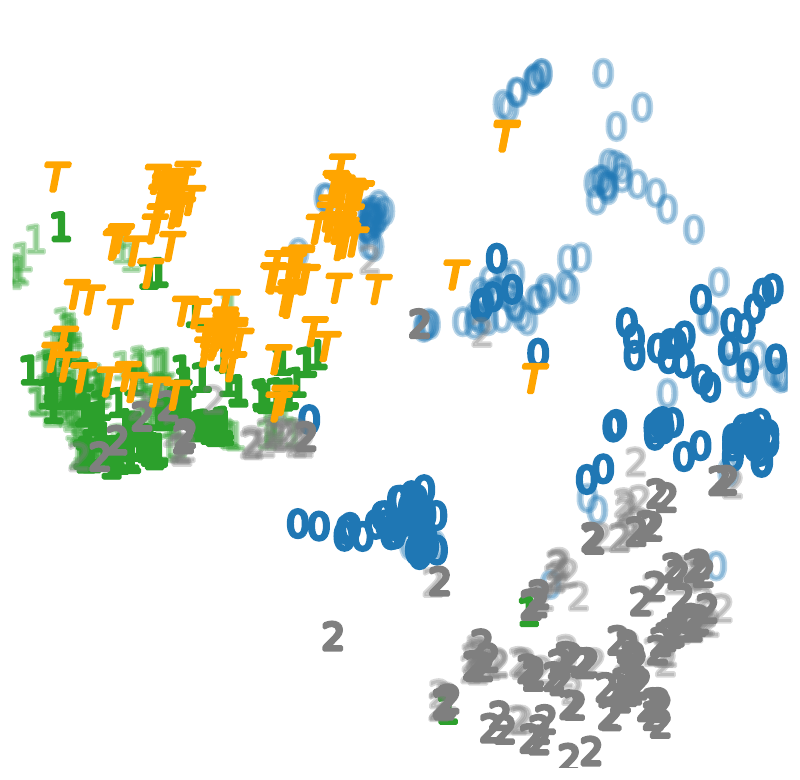}}
    \subfigure[MDCN]{\label{fig:tmdcn}
    \includegraphics[height=.31\linewidth]{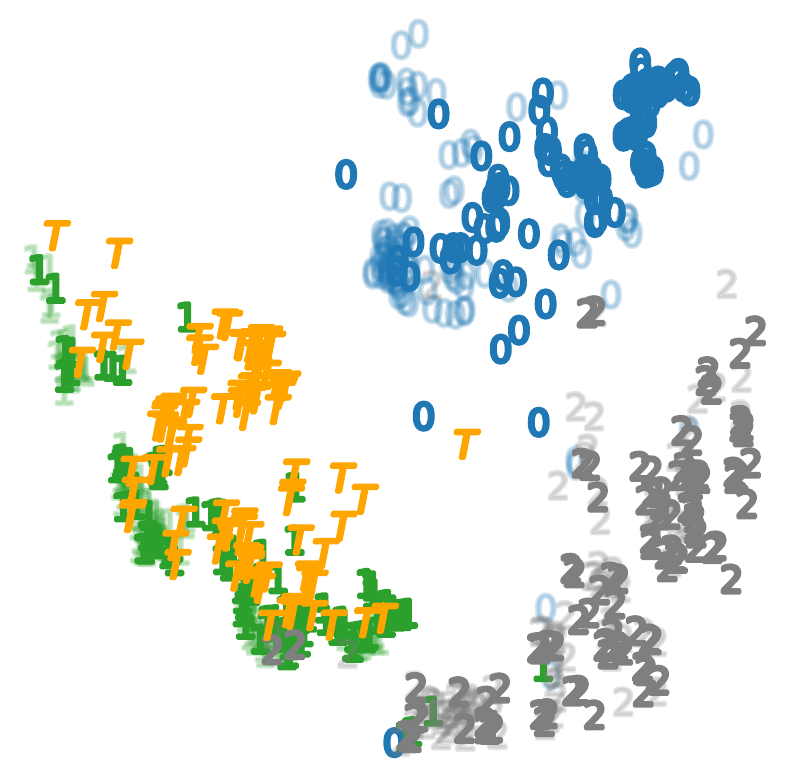}}
\caption{
\label{fig:tsne} T-SNE plot of $\phi(x)$. The domain 9 is chosen as the target domain and indicated with an orange "T". For convenience, the three nearest neighbors of the target domain are labeled as ``0'', ``1'', and ``2''. The weight matrix in MDCN and MDMN both identify the same three neighbors. Solid transparent shades represent the control and treatment groups, respectively. In \ref{fig:tdcfr}, the target domain lies on the outer perimeter of these three neighbors indicating lesser similarity. In \ref{fig:tmdmn}, there exists a visible separation between the treatment and the control groups in the neighbor ``0''. In \ref{fig:tmdcn}, MDCN balances the treatment and control groups with respects to its neighbors, and the target domain is properly surrounded by the three neighbors. Other visualizations are provided in Appendix \ref{app:vis}.}
\end{figure*}

Features meant to capture neural dynamics are generated from LFP data, including power spectral density \cite{Welch1967}, coherence, and Granger causality \cite{Granger1969} for a total of with 9,856 features. To artificially increase sample size, we employ an auto-encoder method that simultaneously interpolates the space between samples and reduces dimensionality \citep{pmlr-v139-oring21a}. Further experimental details are provided in Appendix \ref{app:preprocess}. Following preprocessing, each domain contains 20,000 samples with the feature space dimension reduced to a dimensionality 15. The new feature space still retains domain-specific information, as mouse identity can be accurately predicted ($>90\%$). Next, we simulate the potential outcomes. We randomly draw 15 correlated neural networks with the input and output size: $\mathbb{R}^{15} \rightarrow \mathbb{R}$, for both states, $\{f_1^0,\cdots,f_{15}^0\}$, $\{f_1^1,\cdots,f_{15}^1\}$, that are designed to vary with the center of each domain $s$, $\bar{x}^s=\{\bar{x}^s_1,\cdots,\bar{x}^s_{15}\}$ (details provided in Appendix \ref{app:preprocess}). Essentially, domains that are close in center tend to have similar outcome functions.  We simulate the potential outcomes for domain $s$ as follows, $Y(0)|X,s \sim \mathcal{N}(\sum\nolimits_{j=1}^{15} \sigma(\bar{x}^s_1)f_j^0(X),1)$; $Y(1)|X,s \sim \mathcal{N}(\sum\nolimits_{j=1}^{15} \sigma(\bar{x}^s_1)f_j^1(X),1)$.

We evenly split the data into 10 folds for variability assessment with results summarized in Table \ref{tab:pehe1}. The drawbacks of the tree-based CF approach are demonstrated as the increased feature dimension exacerbates decrease of domain overlap. MDCN demonstrates the best performance in 9 domains, outperforming benchmarks which conduct either domain-level adjustment (MDMN) or treatment-level adjustment (DCFR) but not both. DCFR outperforms CFR with statistical significance, which further supports adjusting for treatment imbalance differentially in different domains. This is also reflected in the combination of CFR and MDMN (MDMNCFR), which results in a drop in performance when compared to MDMN. We visualize the learned feature embeddings $\phi(x)$ in Figure \ref{fig:tsne2}, shedding light on how MDCN predicts on unlabeled target domains. We first find the three closest ``neighborhoods'' to the target domain using the MDCN weight matrix and then plot their relative positions. The target domain in MDCN is located at the center of its three neighbors in Figure \ref{fig:tmdcn}, which enables it to borrow information from their respective outcome predictions. Additionally, the treatment and control groups from the three neighbors are all well-balanced, making MDCN more robust against treatment group imbalances. In Figure \ref{fig:tdcfr}, DCFR regards two of its neighbors as irrelevant domains, preventing it from learning from more reference cases. In Figure \ref{fig:tmdmn}, the treatment group imbalance is not well controlled in MDMN, thus resulting in less competitive generalization to the target domain.

\section{Discussion}
\label{sec:dis}


Here, we propose MDCN, an approach that provides robust estimation of the conditional treatment effect for data collected from multi-center observational studies with particular emphasis in accurately inferring treatment effects from individuals from a new center.  Our approach addresses a lack of methods research with respects to this specific but not uncommon problem. More importantly, our model demonstrates the potential to provide patients from new, unobserved centers with better references for treatment. Additionally, we also elaborate on the design of MDCN and underpin it with supporting theory. In empirical evaluations, MDCN consistently outperforms state-of-the-art domain adaptation methods and causal methods. One limitation of MDCN lies in the scalability with respects to domain size $S$ which comes at a cost of $\mathcal{O}(S^2)$. To address this scalability issue, one could combine similar domains via some similarity heuristic. Likewise, scalability becomes an issue when handling multiple or continuous treatment regimes. As the number of treatment groups increases, learning a balanced treatment embedding tends to be more challenging. Adequately addressing these issues provides interesting directions for future research in this area. Moving forward, our techniques may be applied to multicenter observational studies and neuromodulation; the ethics of neuromodulation are complex, and we do not view our study as directly impacting these discussions.  Finally, as with any developed methodology with relevance to healthcare, the methods proposed here should not be solely relied upon to guide clinical practice, suggest treatments, or recommend any health-related actions without consultation and close collaboration with medical professionals.

\section*{Acknowledgments}
Research reported in this manuscript was supported by the National Institute of Biomedical Imaging and Bioengineering and the National Institute of Mental Health through the National Institutes of Health BRAIN Initiative under Award Number R01EB026937 and the National Institute Of Mental Health of the National Institutes of Health under Award Number R01MH125430. The content is solely the responsibility of the authors and does not necessarily represent the official views of the National Institutes of Health.

\bibliographystyle{plainnat}
\bibliography{refer}

\section*{Checklist}

\begin{enumerate}

\item For all authors...
\begin{enumerate}
  \item Do the main claims made in the abstract and introduction accurately reflect the paper's contributions and scope?
    \answerYes{}
  \item Did you describe the limitations of your work?
    \answerYes{In Section \ref{sec:dis}.}
  \item Did you discuss any potential negative societal impacts of your work?
    \answerYes{In Section \ref{sec:dis}.}
  \item Have you read the ethics review guidelines and ensured that your paper conforms to them?
    \answerYes{}
\end{enumerate}

\item If you are including theoretical results...
\begin{enumerate}
  \item Did you state the full set of assumptions of all theoretical results?
    \answerYes{In Sections \ref{sec:statement}, \ref{sec:mdcn} and \ref{asec:proof}.}
        \item Did you include complete proofs of all theoretical results?
    \answerYes{In Section \ref{asec:proof}.}
\end{enumerate}

\item If you ran experiments...
\begin{enumerate}
  \item Did you include the code, data, and instructions needed to reproduce the main experimental results (either in the supplemental material or as a URL)?
    \answerYes{In the supplemental material.}
  \item Did you specify all the training details (e.g., data splits, hyperparameters, how they were chosen)?
    \answerYes{In Section \ref{app:model}.}
        \item Did you report error bars (e.g., with respect to the random seed after running experiments multiple times)?
    \answerYes{In Table \ref{tab:pehe} and \ref{tab:pehe1}.}
        \item Did you include the total amount of compute and the type of resources used (e.g., type of GPUs, internal cluster, or cloud provider)?
    \answerYes{In Section \ref{app:model}.}
\end{enumerate}

\item If you are using existing assets (e.g., code, data, models) or curating/releasing new assets...
\begin{enumerate}
  \item If your work uses existing assets, did you cite the creators?
    \answerYes{In Section \ref{app:model} and \ref{sec:micedt}.}
  \item Did you mention the license of the assets?
    \answerYes{In Section \ref{app:model} and \ref{sec:micedt}.}
  \item Did you include any new assets either in the supplemental material or as a URL?
    \answerYes{In the supplemental material.}
  \item Did you discuss whether and how consent was obtained from people whose data you're using/curating?
    \answerNA{The assets were approved by Institutional Animal Care and Use Committee (IACUC).} 
  \item Did you discuss whether the data you are using/curating contains personally identifiable information or offensive content?
    \answerNA{}
\end{enumerate}

\item If you used crowdsourcing or conducted research with human subjects...
\begin{enumerate}
  \item Did you include the full text of instructions given to participants and screenshots, if applicable?
    \answerNA{}
  \item Did you describe any potential participant risks, with links to Institutional Review Board (IRB) approvals, if applicable?
    \answerNA{}
  \item Did you include the estimated hourly wage paid to participants and the total amount spent on participant compensation?
    \answerNA{}
\end{enumerate}

\end{enumerate}


\clearpage
\appendix

\setcounter{figure}{0}
\makeatletter 
\renewcommand{\thefigure}{S\@arabic\c@figure}
\makeatother
\setcounter{table}{0}
\makeatletter 
\renewcommand{\thetable}{S\@arabic\c@table}
\makeatother
\setcounter{proposition}{0}
\makeatletter 
\renewcommand{\theproposition}{S\@arabic\c@proposition}
\makeatother
\setcounter{assumption}{0}
\makeatletter 
\renewcommand{\theassumption}{S\@arabic\c@assumption}
\makeatother
\setcounter{claim}{0}
\makeatletter 
\renewcommand{\theclaim}{S\@arabic\c@claim}
\makeatother
\setcounter{lemma}{0}
\makeatletter 
\renewcommand{\thelemma}{S\@arabic\c@lemma}
\makeatother

\section{Proofs for Theoretical Analysis in Section \ref{sec:theory}}
\label{asec:proof}
We heuristically explain in the main article that the proposed method is inspired by an error bound on the target domain. Here, we provide details in its derivation. To make this section self-dependent,
we repeat some definitions, and further comment on their implied properties.

\textbf{Definition 1} (Probabilistic Discrepancy).
For two hypotheses $h,h':\mathbb{R}^P\rightarrow \mathbb{R}$, their difference based on a probabilistic distribution $D$ over $\mathcal{X}$ is defined as  $\gamma(h,h'|D)=\mathbb{E}_{x\sim D}|h(x)-h'(x)|$.

Definition \ref{def:pdis} is used to quantify the distance between any two hypotheses based on a distribution $D$. This discrepancy is symmetric as $\gamma(h,h'|D)=\gamma(h',h|D)$. If we replace $h$ by the ground truth outcome function, it represents the error of $h'$.

\textbf{Definition 2} (Lipschitz Continuity).
A function $f :\mathbb{R}^P\rightarrow\mathbb{R}$ is Lipschitz continuous with parameter $\lambda$, if 
$|f(x_1)-f(x_2)|\leq \lambda ||x_1-x_2||_2$
holds for any vectors $x_1,x_2\in\mathcal{X}$. We denote the family as  $\mathcal{F}_\lambda$.

In this family, for any $\lambda$, $f \in \mathcal{F}_\lambda$ $\implies -f \in \mathcal{F}_\lambda$. We assume that the proposed hypothesis functions $\{h_0, h_1\} \in F_\lambda$ which can be represented by neural networks, and the true functions $\{g_{s,0},g_{s,1}\}_{s=1}^{S} \in F_{\lambda^*}$.

The family of Lipschitz functions with parameter $\lambda$ can also measure the Wasserstein distance between spaces. Its relationship to the Wasserstein-1 distance is described below.

\begin{equation}
\label{eq:wasslike}
    \sup\limits_{f: \mathcal{X} \rightarrow \mathcal{R},||f||_L<\lambda} \mathbb{E}_{x\sim D}[f(x)]-\mathbb{E}_{x\sim D'}[f(x)]=\lambda W_1(D,D')
\end{equation}

\begin{lemma}[Additivity of Lipschitz functions]
If $f_1 \in F_{\lambda_1}$ and $f_2 \in F_{\lambda_2}$, then $f_1+f_2 \in F_{\lambda_1+\lambda_2}$.
\end{lemma}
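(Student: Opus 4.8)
The plan is to verify the defining inequality of Definition~\ref{def:lip} directly for the sum $f_1+f_2$. Let $f_1\in\mathcal{F}_{\lambda_1}$ and $f_2\in\mathcal{F}_{\lambda_2}$, and fix arbitrary $x_1,x_2\in\mathcal{X}$. First I would write
\[
|(f_1+f_2)(x_1)-(f_1+f_2)(x_2)| = |(f_1(x_1)-f_1(x_2)) + (f_2(x_1)-f_2(x_2))|,
\]
and then apply the triangle inequality on $\mathbb{R}$ to bound this by $|f_1(x_1)-f_1(x_2)| + |f_2(x_1)-f_2(x_2)|$.

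Next I would invoke the Lipschitz hypothesis on each summand: $|f_1(x_1)-f_1(x_2)|\le \lambda_1\|x_1-x_2\|_2$ and $|f_2(x_1)-f_2(x_2)|\le \lambda_2\|x_1-x_2\|_2$. Adding these two bounds gives
\[
|(f_1+f_2)(x_1)-(f_1+f_2)(x_2)| \le (\lambda_1+\lambda_2)\|x_1-x_2\|_2.
\]
Since $x_1,x_2$ were arbitrary, this is exactly the statement that $f_1+f_2\in\mathcal{F}_{\lambda_1+\lambda_2}$, completing the argument.

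There is essentially no obstacle here: the result is an immediate consequence of the triangle inequality together with the definition of Lipschitz continuity, and requires no additional structure (convexity, boundedness of $\mathcal{X}$, etc.). The only thing worth a remark is that the constant $\lambda_1+\lambda_2$ need not be tight — $f_1+f_2$ may belong to $\mathcal{F}_{\lambda'}$ for some $\lambda'<\lambda_1+\lambda_2$ — but the lemma only asserts membership in $\mathcal{F}_{\lambda_1+\lambda_2}$, so no sharpness claim is needed. This lemma will subsequently be used, in combination with the Kantorovich–Rubinstein identity \eqref{eq:wasslike}, to control the discrepancy between sums of hypothesis functions (e.g.\ $h - g$ with $h\in\mathcal{F}_\lambda$, $g\in\mathcal{F}_{\lambda^*}$, so that $h-g\in\mathcal{F}_{\lambda+\lambda^*}$) when deriving the target-domain bound in Theorem~\ref{thm:bound}.
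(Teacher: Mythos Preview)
Your proposal is correct and follows exactly the same approach as the paper: the paper simply states that the result ``can be shown trivially with the triangle inequality,'' and your write-up is a clean, explicit unpacking of precisely that argument. Your closing remark about the later use of the lemma (to get $h-g\in\mathcal{F}_{\lambda+\lambda^*}$ via the Kantorovich--Rubinstein form) also matches how the paper deploys it in the proofs of Propositions~\ref{prop:bt} and~\ref{prop:cd}.
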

This can be shown trivially with the triangle inequality.  By the symmetry of the Lipschitz family, 
$f_1-f_2,-f_1+f_2,-f_1-f_2 \in F_{\lambda_1+\lambda_2}$.


\begin{lemma}[Symmetry of the Wasserstein distance]
For any two probabilistic distributions $D$ and $D'$ over $\mathcal{X}$,
$W_1(D,D')=W_1(D',D)$.
\end{lemma}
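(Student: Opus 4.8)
The plan is to reduce the claim to the sign-symmetry of the unit Lipschitz ball, which the excerpt already records immediately after Definition~2 (``for any $\lambda$, $f \in \mathcal{F}_\lambda \implies -f \in \mathcal{F}_\lambda$''). Starting from the Kantorovich--Rubinstein representation \eqref{eq:wasslike} specialized to $\lambda = 1$, I would write
\[
W_1(D,D') = \sup_{\|f\|_L \le 1}\bigl(\mathbb{E}_{x\sim D}[f(x)] - \mathbb{E}_{x\sim D'}[f(x)]\bigr),
\qquad
W_1(D',D) = \sup_{\|f\|_L \le 1}\bigl(\mathbb{E}_{x\sim D'}[f(x)] - \mathbb{E}_{x\sim D}[f(x)]\bigr).
\]

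The key step is then a one-line reindexing. For every $f \in \mathcal{F}_1$ we also have $-f \in \mathcal{F}_1$, and
\[
\mathbb{E}_{x\sim D'}[f(x)] - \mathbb{E}_{x\sim D}[f(x)] = \mathbb{E}_{x\sim D}[(-f)(x)] - \mathbb{E}_{x\sim D'}[(-f)(x)].
\]
Thus $f \mapsto -f$ is a bijection of $\mathcal{F}_1$ onto itself that carries the objective of the second supremum onto the objective of the first; taking the supremum over the same set on both sides yields $W_1(D',D) = W_1(D,D')$.

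An alternative route, which does not even invoke duality, is the primal coupling definition $W_1(D,D') = \inf_{\pi \in \Pi(D,D')}\mathbb{E}_{(x,y)\sim\pi}\|x-y\|_2$: the coordinate swap $(x,y)\mapsto(y,x)$ is a bijection between $\Pi(D,D')$ and $\Pi(D',D)$ that leaves the transport cost $\|x-y\|_2 = \|y-x\|_2$ unchanged, so the two infima agree. There is really no obstacle in this lemma; the only point requiring a moment's care is to be explicit about which representation of $W_1$ is being used, and since the paper introduces $W_1$ only through \eqref{eq:wasslike}, the dual argument above is the cleanest to present.
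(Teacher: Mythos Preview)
Your proposal is correct and follows essentially the same route as the paper: both arguments work in the Kantorovich--Rubinstein dual and exploit the sign-symmetry $f\in\mathcal{F}_1\Rightarrow -f\in\mathcal{F}_1$. The paper picks a maximizer $f'$ and derives the two inequalities separately, whereas you handle the supremum directly via the bijection $f\mapsto -f$; this is a minor polish (it avoids tacitly assuming the supremum is attained) rather than a genuinely different idea.
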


\begin{proof}
Assume that $f' \in F_1$ is the function that maximizes $\mathbb{E}_{x\sim D}[f(x)]-\mathbb{E}_{x\sim D}[f(x)]$. We have $\mathbb{E}_{x\sim D}[f'(x)]-\mathbb{E}_{x\sim D}[f'(x)]=W_1(D,D')$. Since $f \in \mathcal{F}_1 \implies -f'\in \mathcal{F}_1$,

\begin{equation*}
 W_1(D,D')=\mathbb{E}_{x\sim D'}[-f'(x)]-\mathbb{E}_{x\sim D}[-f'(x)]\le  W_1(D',D).
\end{equation*}
From the other direction, we can similarly show that  
$W_1(D',D)\ge W_1(D,D')$, which leads to $W_1(D',D)=W_1(D,D')$.

\end{proof}

To bound the error on the target domain, we take a two-step procedure. First, we bound the error of the unobserved potential outcomes within each domain given the observed data. Then we explore the cross-domain relationships to bound the overall error on the target domain with labeled data from source domains.

\textbf{Between-treatment Bound}

For any source domain $s$ with $D_s$, we want to minimize the distance between our proposed hypotheses $h_0,h_1$ and $g_{s,0},g_{s,1}$. Since we are only able to observe $T=0$ and $T=1$ in $D_{s,0}$, and $D_{i,1}$, respectively. The full potential outcome error on $D_s$ cannot be directly calculated.  
Instead, Proposition \ref{prop:bt} suggests that this error can be bounded.

\begin{proposition}
\label{prop:bt}
For any source domain $s$ with the marginal probabilities of receiving treatment and control as $p_s^{T=1}$ and $p_s^{T=0}$,  the probabilistic discrepancy between $\{h_0,h_1\}$ and $\{g_{s,0},g_{s,1}\}$,  $\gamma(h_0,g_{i,0}|D_s)+\gamma(h_1,g_{i,1}|D_s)$ has an upper bound,

\begin{equation*}
\small
\begin{split}
\gamma(h_0,g_{s,0}|D_s)+\gamma(h_1,g_{s,1}|D_s)\le 
\gamma(h_0,g_{s,0}|D_{s,0})+\gamma(h_1,g_{i,1}|D_{s,1})+{(\lambda+\lambda^*)}W_1(D_{s,0},D_{s,1}).
\end{split}
\end{equation*}

\end{proposition}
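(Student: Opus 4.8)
The plan is to split $D_s$ into its control and treatment mixture components, and then ``transport'' the two unobservable (counterfactual) discrepancy terms onto the observable ones at the cost of a Wasserstein-1 penalty. The starting point is the mixture identity from the problem setup, $(1-p_s^{T=1})D_{s,0}+p_s^{T=1}D_{s,1}=D_s$. Since the probabilistic discrepancy $\gamma(h,h'|D)=\mathbb{E}_{x\sim D}|h(x)-h'(x)|$ is linear in the distribution argument, this gives $\gamma(h_0,g_{s,0}|D_s)=(1-p_s^{T=1})\gamma(h_0,g_{s,0}|D_{s,0})+p_s^{T=1}\gamma(h_0,g_{s,0}|D_{s,1})$, and analogously $\gamma(h_1,g_{s,1}|D_s)=(1-p_s^{T=1})\gamma(h_1,g_{s,1}|D_{s,0})+p_s^{T=1}\gamma(h_1,g_{s,1}|D_{s,1})$. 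The terms $\gamma(h_0,g_{s,0}|D_{s,0})$ and $\gamma(h_1,g_{s,1}|D_{s,1})$ are the errors on observable data; the remaining pieces $\gamma(h_0,g_{s,0}|D_{s,1})$ and $\gamma(h_1,g_{s,1}|D_{s,0})$ are the ``counterfactual'' pieces that must be controlled.

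The key step is to control these counterfactual pieces by a change of distribution. Since $h_0\in\mathcal{F}_\lambda$ and $g_{s,0}\in\mathcal{F}_{\lambda^*}$, Lemma~1 (additivity of Lipschitz functions) gives $h_0-g_{s,0}\in\mathcal{F}_{\lambda+\lambda^*}$, and because taking the absolute value of a function does not increase its Lipschitz constant, $|h_0-g_{s,0}|\in\mathcal{F}_{\lambda+\lambda^*}$ as well. Applying the dual characterization \eqref{eq:wasslike} to the function $|h_0-g_{s,0}|$ yields
\begin{equation*}
\gamma(h_0,g_{s,0}|D_{s,1})-\gamma(h_0,g_{s,0}|D_{s,0})=\mathbb{E}_{x\sim D_{s,1}}|h_0(x)-g_{s,0}(x)|-\mathbb{E}_{x\sim D_{s,0}}|h_0(x)-g_{s,0}(x)|\le(\lambda+\lambda^*)W_1(D_{s,1},D_{s,0}),
\end{equation*}
and by Lemma~2 (symmetry of the Wasserstein distance) the right-hand side equals $(\lambda+\lambda^*)W_1(D_{s,0},D_{s,1})$. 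The identical argument with the roles of control and treatment interchanged bounds $\gamma(h_1,g_{s,1}|D_{s,0})-\gamma(h_1,g_{s,1}|D_{s,1})$ by the same quantity.

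Substituting these two bounds back into the mixture decomposition, the observable control error and observable treatment error recombine into $\gamma(h_0,g_{s,0}|D_{s,0})+\gamma(h_1,g_{s,1}|D_{s,1})$, while the Wasserstein terms pick up coefficients $p_s^{T=1}$ and $(1-p_s^{T=1})$ respectively; these sum to $1$, producing exactly the stated inequality. I do not anticipate a genuine obstacle here: the only point that requires care is the observation that $|h_0-g_{s,0}|$ (and $|h_1-g_{s,1}|$) inherits the combined Lipschitz constant $\lambda+\lambda^*$, since that is what licenses the Wasserstein dual bound; the remainder is bookkeeping with the mixture weights together with the two already-established lemmas.
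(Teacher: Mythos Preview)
Your proposal is correct and follows essentially the same route as the paper: mixture-decompose $D_s$ into $D_{s,0}$ and $D_{s,1}$, bound each counterfactual discrepancy by the observable one plus $(\lambda+\lambda^*)W_1(D_{s,0},D_{s,1})$ via the Lipschitz property of $h_t-g_{s,t}$, and combine so that the coefficients $p_s^{T=1}$ and $p_s^{T=0}=1-p_s^{T=1}$ sum to $1$. Your explicit remark that $|h_0-g_{s,0}|$ inherits the Lipschitz constant $\lambda+\lambda^*$ (needed to invoke the dual form \eqref{eq:wasslike}) is a detail the paper leaves implicit, but otherwise the arguments are identical.
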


\begin{proof}
For ease of derivation, we only expand on the error bound of the control group $T=0$,  $\gamma(h_0,g_{s,0}|D_s)$. In T-learner, $h_0$ and $h_1$ are constructed separately, and they do not influence each other.  The bound on $T=1$ can be derived with identical steps. 

\begin{equation*}
\small
    \begin{split}
    \gamma(h_0,g_{s,0}|D_s)&=\mathbb{E}_{x\sim D_s}|h_0(x)-g_{s,0}(x)|\\
    &=p_s^{T=0}\mathbb{E}_{x\sim D_{s,0}}|h_0(x)-g_{s,0}(x)|+p_s^{T=1}\mathbb{E}_{x\sim D_{s,1}}|h_0(x)-g_{s,0}(x)|\\
    &= p_s^{T=0}\mathbb{E}_{x\sim D_{s,0}}|h_0(x)-g_{s,0}(x)|+
    p_s^{T=1}\mathbb{E}_{x\sim D_{s,0}}|h_0(x)-g_{s,0}(x)|\\
    & \quad\quad\quad\quad - p_s^{T=1}\mathbb{E}_{x\sim D_{s,0}}|h_0(x)-g_{s,0}(x)|+
    p_s^{T=1}\mathbb{E}_{x\sim D_{s,1}}|h_0(x)-g_{s,0}(x)|\\
    &=\gamma(h_0,g_{s,0}|D_{s,0})+p_s^{T=1}\biggl(\mathbb{E}_{x\sim D_{s,1}}|h_0(x)-g_{s,0}(x)|-\mathbb{E}_{x\sim D_{s,0}}|h_0(x)-g_{s,0}(x)|\biggl)\\
    & \le \gamma(h_0,g_{s,0}|D_{s,0})+p_s^{T=1}\biggl( \sup\limits_{f: \mathcal{X} \rightarrow \mathcal{R},||f||_L<\lambda+\lambda^*} \mathbb{E}_{x\sim D_{i,0}}[f(x)]-\mathbb{E}_{x\sim D_{s,1}}[f(x)]\biggl)\\
    &=\gamma(h_0,g_{s,0}|D_{s,0})+p_s^{T=1}(\lambda+\lambda^*)W_1(D_{s,0},D_{s,1})
    \end{split}
\end{equation*}

In the fifth line, we use the additivity of the Lipschitz family, $h_0-g_{s,0} \in \mathcal{F}_{\lambda+\lambda^*}$.
Similarly, between $h_1$ and $g_{s,1}$, we have 
\begin{equation*}
\gamma(h_1,g_{s,1}|D_i)\le \gamma(h_1,g_{s,1}|D_{s,1})+p_s^{T=0}(\lambda+\lambda^*)W_1(D_{s,1},D_{s,0}).    
\end{equation*}

With the symmetry of $W(\cdot , \cdot)$, $(\lambda+\lambda^*)W_1(D_{s,1},D_{s,0})=(\lambda+\lambda^*)W_1(D_{s,0},D_{s,1})$. Then we sum up the two groups, and the bound is proven.

\end{proof}

\textbf{Cross-Domain Bound}

In this step, we simplify the problem to a multiple domain setups but assuming randomization within each domain,
$D_{s,0}=D_{s,1}=D_{s}$. We keep the other conditions unchanged such as the domain shift and the shift in outcome functions. As the outcomes on the target domain are unobservable, the error can still only be bounded rather than calculated. Proposition \ref{prop:cd} suggests a bound.

Before we state the bound, we define two quantities.
We assume that $w=\{w_s\}_{s=1}^{S-1}$ are weights on labeled source domains and they sum up to 1. $\gamma^*_0$ and $\gamma^*_0$ define the minimum discrepancies attainable for the weighted summation of all domains.

\begin{equation}
\label{eq:gammamin}
\begin{split}
&\gamma^*_0=\min\limits_{h_0}\biggl[\gamma(h_0,g_{S,0}|D_S)+\sum\limits_{s=1}^{S-1}w_s\gamma(h_0,g_{s,0}|D_s)\biggl]; \\
&\gamma^*_1=\min\limits_{h_1}\biggl[\gamma(h_1,g_{S,1}|D_S)+\sum\limits_{s=1}^{S-1}w_s\gamma(h_1,g_{s,1}|D_s)\biggl].
\end{split}
\end{equation}

In \eqref{eq:gammamin}, the two quantities depict the fundamental difference in true outcome functions across all domains, which is unoptimizable. The larger its value, the harder the target domain error can be controlled.
If $g_{s,0},g_{s,1}$ only differ slightly across domains, we are more likely to find some universally defined $h_0^*$ and  $h_1^* \in \mathcal{F}_{\lambda^*}$ that make $\gamma^*_0$ and $\gamma^*_1$ small.

\begin{proposition}
\label{prop:cd}
Assume that we have full randomization within each domain, and
$h_0^*$ and  $h_1^* \in \mathcal{F}_{\lambda^*}$ with minimum errors $\gamma^*_0$ and $\gamma^*_1$ in \eqref{eq:gammamin}. For any positive weights on source domains $w=\{w_s\}_{s=1}^{S-1}$ with $\sum_{s=1}^{S-1} w_s=1$. The target domain error can be bounded by,
 
 \begin{equation}
 \label{eq:bdrand}
     \begin{split}
     \gamma(h_0,g_{S,0}|D_S)+\gamma(h_1,g_{S,1}|D_S)&\le
     2(\lambda+\lambda^*)W_1(D_S, \sum\limits_{s=1}^{S-1}w_sD_s)+\gamma_0^*+\gamma_1^*\\
     &+
     \sum\limits_{s=1}^{S-1}w_s[\gamma(h_0,g_{0,s}|D_s)+\gamma(h_1,g_{1,s}|D_s) ].
     \end{split}
 \end{equation}
\end{proposition}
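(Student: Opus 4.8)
The plan is to prove Proposition \ref{prop:cd} by introducing the auxiliary hypotheses $h_0^*, h_1^* \in \mathcal{F}_{\lambda^*}$ as intermediate ``pivots'' and repeatedly applying the triangle inequality for the probabilistic discrepancy $\gamma$. Concretely, I would work with the $T=0$ branch first (the $T=1$ branch being identical, since the T-learner constructs $h_0$ and $h_1$ independently) and decompose
\[
\gamma(h_0, g_{S,0} \mid D_S) \le \gamma(h_0, h_0^* \mid D_S) + \gamma(h_0^*, g_{S,0} \mid D_S).
\]
The second term on the right is part of $\gamma_0^*$ by the definition in \eqref{eq:gammamin}, so it can be kept aside. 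The first term, $\gamma(h_0, h_0^* \mid D_S)$, is a discrepancy between two functions evaluated on the \emph{target} distribution, and the goal is to transfer it to the source distributions.

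The key transfer step is to move from $D_S$ to the mixture $\sum_{s=1}^{S-1} w_s D_s =: D_{/S}$. Since $h_0 \in \mathcal{F}_\lambda$ and $h_0^* \in \mathcal{F}_{\lambda^*}$, the additivity lemma (Lemma 1) gives $h_0 - h_0^* \in \mathcal{F}_{\lambda+\lambda^*}$; moreover $|h_0 - h_0^*|$ is itself $(\lambda+\lambda^*)$-Lipschitz (absolute value of a Lipschitz function is Lipschitz with the same constant), so by the duality \eqref{eq:wasslike},
\[
\gamma(h_0, h_0^* \mid D_S) - \gamma(h_0, h_0^* \mid D_{/S}) \le (\lambda+\lambda^*)\, W_1(D_S, D_{/S}).
\]
Then I would split $\gamma(h_0, h_0^* \mid D_{/S}) = \sum_{s=1}^{S-1} w_s \,\gamma(h_0, h_0^* \mid D_s)$ by linearity of expectation, and for each $s$ apply the triangle inequality again: $\gamma(h_0, h_0^* \mid D_s) \le \gamma(h_0, g_{s,0}\mid D_s) + \gamma(g_{s,0}, h_0^*\mid D_s)$. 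Summing with weights $w_s$, the first pieces assemble into $\sum_s w_s \gamma(h_0, g_{0,s}\mid D_s)$ and the second pieces $\sum_s w_s \gamma(h_0^*, g_{s,0}\mid D_s)$ combine with the earlier leftover $\gamma(h_0^*, g_{S,0}\mid D_S)$ to form exactly $\gamma_0^*$ as defined in \eqref{eq:gammamin}. This accounts for one factor of $(\lambda+\lambda^*)W_1(D_S, D_{/S})$; I would need to check whether the stated bound's coefficient $2(\lambda+\lambda^*)$ arises because a symmetric application (also pivoting $g_{S,0}$ through the mixture, or bounding $\gamma(g_{S,0}, h_0^* \mid D_S)$ via the mixture as well) is required — this bookkeeping of exactly where the factor of $2$ enters is the one place to be careful.

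Finally I would add the $T=0$ and $T=1$ bounds, using the symmetry of $W_1$ (Lemma 2) if needed to align the Wasserstein terms, to obtain \eqref{eq:bdrand}. The main obstacle is not any single inequality — each is a routine application of the triangle inequality, Lipschitz additivity, and Kantorovich–Rubinstein duality — but rather organizing the chain of pivots so that the error terms regroup precisely into the definition of $\gamma_0^*, \gamma_1^*$ and the Wasserstein term comes out with the correct constant $2(\lambda+\lambda^*)$; in particular one must ensure that $h_0^*$ and $h_1^*$ are chosen as the \emph{minimizers} in \eqref{eq:gammamin} so that the assembled cross-domain discrepancy terms are genuinely bounded by $\gamma_0^* + \gamma_1^*$ rather than by a larger quantity involving an arbitrary $h^*$.
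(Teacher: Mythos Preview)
Your proposal is correct and follows essentially the same argument as the paper: triangle inequality to introduce $h_0^*$, transfer $\gamma(h_0,h_0^*\mid D_S)$ to the mixture via Kantorovich--Rubinstein and the Lipschitz additivity lemma, then triangle-split each source term so that the $\gamma(h_0^*,g_{s,0}\mid D_s)$ pieces combine with $\gamma(h_0^*,g_{S,0}\mid D_S)$ to form exactly $\gamma_0^*$. Your uncertainty about the constant $2$ is unwarranted: your own chain already yields a single factor of $(\lambda+\lambda^*)W_1(D_S,D_{/S})$ for the $T=0$ branch, and the identical argument for $T=1$ contributes another, so the $2$ arises purely from summing the two branches---no additional pivoting of $g_{S,0}$ through the mixture is needed.
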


\begin{proof}
Again, we prove it on the control group and extend it to both groups with the property of ``T-learner".

\begin{equation*}
\resizebox{.99\textwidth}{!}{%
\begin{math}
    \begin{aligned}
        \gamma(h_0,g_{S,0}|D_S)&\le \gamma(h_0,h_0^*|D_S)+\gamma(g_{S,0},h_0^*|D_S) \text{~ ( Triangle Inequality )}\\
        &=\gamma(h_0,h_0^*|D_S)+\gamma(g_{S,0},h_0^*|D_S)+\sum\limits_{s=1}^{S-1}w_s\gamma(h_0,h_0^*|D_s)-\sum\limits_{s=1}^{S-1}w_s\gamma(h_0,h_0^*|D_s)\\
        &\le\gamma(h_0,h_0^*|D_S)+[\gamma(g_{S,0},h_0^*|D_S)+\sum\limits_{s=1}^{S-1}w_s\gamma(g_{s,0},h_0^*|D_s)]\\&\quad\quad\quad\quad\quad\quad+\sum\limits_{s=1}^{S-1}w_s\gamma(h_0,g_{s,0}|D_s)-\sum\limits_{s=1}^{S-1}w_s\gamma(h_0,h_0^*|D_s)\\
        &= \gamma(h_0,h_0^*|D_S)-\sum\limits_{s=1}^{S-1}w_s\gamma(h_0,h_0^*|D_s)+\gamma_0^* + \sum\limits_{s=1}^{S-1}w_s\gamma(h_0,g_{0,s}|D_s) \\
        & \le (\lambda+\lambda^*)W_1(D_S, \sum\limits_{s=1}^{S-1}w_s D_s)+\gamma_0^*+\sum\limits_{s=1}^{S-1}w_s\gamma(h_0,g_{0,s}|D_s) ~~ ( \text{as } h_0-h_0^* \in \mathcal{F}_{\lambda+\lambda^*} )\\ 
    \end{aligned}
\end{math}
}
\end{equation*}

Likewise, we can obtain the following quantity for the treatment group.
\begin{equation*}
 \gamma(h_1,g_{S,1}|D_S) \le (\lambda+\lambda^*)W_1(D_S, \sum\limits_{s=1}^{S-1}w_sD_s)+\gamma_1^*+\sum\limits_{s=1}^{S-1}w_s\gamma(h_1,g_{1,s}|D_s)  
\end{equation*}
Lastly, we sum over two groups to get the bound in \eqref{eq:bdrand}.
\end{proof}

\textbf{Overall Bound}

Proposition  \ref{prop:bt} or Proposition \ref{prop:cd} only partially address our big goal. However, their combination gives us the overall bound on the target domain in a multicenter observational study setup.

\textbf{Theorem 2}
For any positive weights on source domain $w=\{w_s\}_{s=1}^{S-1}$ with $\sum_{s=1}^{S-1} w_s=1$. The overall target domain error for the proposed hypothesis functions $\{h_0,h_1\}$ can be bounded by,
 \begin{equation*}
 \small
     \begin{split}
     \gamma(h_0,g_{S,0}|D_S)+\gamma(h_1,g_{S,1}|D_S)&\le
     (\lambda+\lambda^*)[2W_1(D_S, \sum\limits_{s=1}^{S-1}w_{s}D_s)+\sum\limits_{s=1}^{S-1}w_{s}W_1(D_{s,0},D_{s,1})] \\
     &+
     \sum\limits_{s=1}^{S-1}w_{s}[\gamma(h_0,g_{s,0}|D_{s,0})+\gamma(h_1,g_{i,1}|D_{s,1})]+\gamma_0^*+\gamma_1^*.
     \end{split}
 \end{equation*}

\begin{proof}
We replace each $\gamma(h_0,g_{0,s}|D_s)$ and $\gamma(h_1,g_{1,s}|D_s)$ in Proposition \ref{prop:cd} with the bound derived in Proposition \ref{prop:bt}, and the overall bound can be obtained. 
\end{proof}

\section{Method Implementations and Data Preprocessing}

\subsection{Model Specifications}
\label{app:model}

\begin{algorithm}[ht]
	\caption{MDCN Algorithm \label{alg}}
	\begin{algorithmic}
 \STATE [\textbf{Input}:] Data with treatment labels and observed outcomes, $\{\{t_i, x_i, y_i\}_{i=1}^{N_s}\}_{s=1}^{S-1}$ from source domains and unlabeled data from the target domain $\{x_i\}_{i=1}^{N_S}$. \\
\STATE [\textbf{Output}:] Feature embedding function $\phi$, potential outcome models $\{h_0,h_1\}$, domain discriminator $f_{cd}$ and treatment discriminator $f_{bt}$ .\\\hrulefill
		\FOR{iter $= 1$ to $n^{iter}$}
		\STATE Sample a mini-batch from source domains  $\{\{t_i, x_i, y_i\}_{i=1}^{N_s}\}_{s=1}^{S-1}$ and target domain $\{x\}_{i=1}^{N_S}$. 
		\STATE Calculate $\{\boldsymbol{l}_s\}_{s=1}^S$ and update $\{\boldsymbol{w}_s\}_{s=1}^S$ in \eqref{eq:rawdif} for cross-domain adjustment.
		    \FOR{iter $= 1$ to $n_{1}$}
            \STATE Sample a mini-batch from source domains  $\{\{t_i, x_i, y_i\}_{i=1}^{N_s}\}_{s=1}^{S-1}$ and target domain $\{x\}_{i=1}^{N_S}$.
        \STATE Optimize full loss in \eqref{eq:fullloss} with respect to the parameters in $\phi, h_0, h_1$.
        \STATE Update the parameters in $\phi, h_0, h_1$.
        \ENDFOR\\
        \FOR{iter $=1$ to $n_{2}$}
        \STATE Sample a mini-batch from source domains  $\{\{t_i, x_i, y_i\}_{i=1}^{N_s}\}_{s=1}^{S-1}$ and target domain $\{x\}_{i=1}^{N_S}$.
        \STATE Optimize the cross-domain  $-L_{cd}(\phi,f_{cd})$ in \eqref{eq:cd} with respect to the parameters in $f_{cd}$. 
        \STATE Update the parameters in $f_{cd}$.
        \STATE Optimize the between-treatment  $-L_{bt}(\phi,f_{bt})$ in \eqref{eq:bt} with respect to the parameters in $f_{bt}$. 
        \STATE Update the parameters in $f_{bt}$.
    \ENDFOR\\
            \ENDFOR\\
	\end{algorithmic}
\end{algorithm}

For MLP, CFR, MDMN, MDNMCFR and MDCN, we use the same model architecture for a fair comparison. The feature embedding network is parametrized through a neural network with two hidden layers of 50 units each. The outcome networks for both treatment and control groups are based on a neural network with two hidden layers of 50 units. For the circular data example and the mice data example, we set the dimensions of the embedded space to 10 and 20, respectively. The Lipschitz constraints for the Wasserstein-1 distance is maintained through the gradient penalty \citep{gulrajani2017improved}. The tuning parameters have fixed values, as $\alpha$=1e-3 and $\beta$=5e-4. In practice, these specifications could also be tuned. We use ELU as the activation function  \citep{djork2016fast}  and ADAM as the optimizer \citep{kingma2015adam} with step size 1e-4 throughout both experiments.  The full algorithm of MDCN is sketched out in Algorithm \ref{alg}.
The code will be public on Github with the MIT license when the manuscript is accepted.

The method CF is implemented through the R package grf with GPL-3 license \citep{grf}. We let the model automatically tune its hyper-parameters by specifying \verb|tune.parameters="all"|.

All Python- based methods are run on a single NVIDIA P100 GPU; the R-based CF is run on an Intel(R) Xeon(R) Gold 6154 CPU.

\subsection{Preprocessing the mice data}
\label{app:preprocess}

The raw data have high variability, as the first 100 principle components only explains 50\% of the total variance. Additionally, the limited sample size poses a challenge with regards to the variability assessment.
To enlarge the sample size and decrease the variability of the data. We use a recently published auto-encoder for data interpolation \citep{pmlr-v139-oring21a}.  The encoder $f_e: \mathbb{R}^{9856}\rightarrow \mathbb{R}^{15}$ is a neural network that has three hidden layers of 100 units each. 
The decoder $f_d: \mathbb{R}^{15}\rightarrow \mathbb{R}^{9856}$ is a neural network with one hidden layer of 50 units. Then, we construct a discriminative network: $f_i:\mathbb{R}^{15}\rightarrow [0,1]$ with a single hidden layer of 50 units to judge the qualify of interpolated data. Lastly, we add a
 network with no hidden layer to predict mice identity $f_m: \mathbb{R}^{15}\rightarrow [0,1]^{21}$.

For each domain $s$,  we have the following loss for the control group (home cage),
\begin{equation*}
\resizebox{.98 \textwidth}{!}{
\begin{math}
\begin{aligned}
L_{s,0}=&\mathbb{E}_{x_1,x_2\sim{D_{s,0}},\alpha\sim U(0,1)}\min_{f_e,f_d,f_m} \max_{f_i} \biggl\{ \\ &\biggl[(f_d(f_e(x_1))-x_1)^2+(f_d(f_e(x_2))-x_2)^2\biggl]+ ~\hspace{7cm} \text{(I)}\\
&.1\biggl[f_e(f_d(\alpha f_e(x_1)+(1-\alpha)f_e(x_2)))-(\alpha f_e(x_1)+(1-\alpha)f_e(x_2))\biggl]^2+ ~\hspace{3.3cm} \text{(II)}\\
&.05\biggl[{CE}(f_m(f_e(x_1)),s))+ {CE}(f_m(f_e(x_2)),s))+{CE}(f_m(\alpha f_e(x_1)+(1-\alpha)f_e(x_2),s))\biggl]+ ~\hspace{.2cm} \text{(III)}\\
&.05\biggl[\log(f_i(f_e(x_1)))+ \log(f_i(f_e(x_2)))+\log(1-f_i(\alpha f_e(x_1)+(1-\alpha)f_e(x_2)))\biggl]\biggl\}. ~\hspace{1.5cm} \text{(IV)}
\end{aligned}
\end{math}
}
\end{equation*}
We have the data reconstruction in (I). 
(II) is the ``cycle loss" \citep{pmlr-v139-oring21a}, which is to reconstruct the interpolated feature embedding. It also contributes to make $f_e$ and $f_d$ as reciprocal of each other. (III) is the cross-entropy loss for each domain to be predictive of its domain identity $s$. It is helpful in maintaining domain-specific information and creating domain shift.
(IV) is the adversarial loss. It discriminates how similar interpolated points are to the raw data. Enforcing it enables the interpolation to retain the original data distributions.
The full loss is calculated by summing over all domains and all treatment groups,  $\sum_{s=1}^{21}[L_{s,0}+L_{s,1}]$.
We train this framework for 12,000 iterations. In each iteration, we randomly draw 20 samples from each treatment group and each domain, which makes a total batch size of 840. After the training is finished, we observe that the embedded space explains >30\% of the total variance.
we randomly interpolate 10,000 samples for each domain and each treatment group to make an enlarged dataset. 
These interpolated points overall reaches >90\% accuracy in predicting the mouse identity, with  $>60\%$ confidence indicated by the discriminative network $f_i$ that they are from the true data.

For outcome models, we prepare 30 mini neural networks with input and output size: $\mathbb{R}^{15} \rightarrow \mathbb{R}$,  $\{f_1^0,\cdots,f_{15}^0\}$, $\{f_1^1,\cdots,f_{15}^1\}$.
These neural networks all have a single hidden layer with 15 units, and hyperbolic tangent activation functions.
Their weights are initialized through the Xavier initialization \cite{glorot2010understanding}. 
Then we use 15 evenly spaced points from $[-1.5,1.5]$ to shift their weight parameters to make them distinct.
Explicitly, we shift the weight parameters in $f_s^0 \in \{f_1^0,\cdots,f_{15}^0\}$  by $+(3s/15-1.5)$. For $\{f_1^1,\cdots,f_{15}^1\}$, we shift them by another permutation of these 15 points.
Lastly, they are combined with the the center of each domain $s$, $\bar{x}^s=\{\bar{x}^s_1,\cdots,\bar{x}^s_{15}\}$ to create the overall shift in outcome functions. It encourages domains that are close in center to have similar outcome functions,
\begin{equation*}
\begin{aligned}
Y(0)|X,s \sim \mathcal{N}(\sum\limits_{j=1}^{15}sigmoid(\bar{x}^s_1)f_j^0(X),1); ~~ Y(1)|X,s \sim \mathcal{N}(\sum\limits_{j=1}^{15}sigmoid(\bar{x}^s_1)f_j^1(X),1).
\end{aligned}
\end{equation*}

\section{Additional Visualizations}
\label{app:vis}
\begin{figure*}[t]
\centering
 \subfigure[MDMN]{\label{appfig:MDMN}
    \includegraphics[height=.3\linewidth]{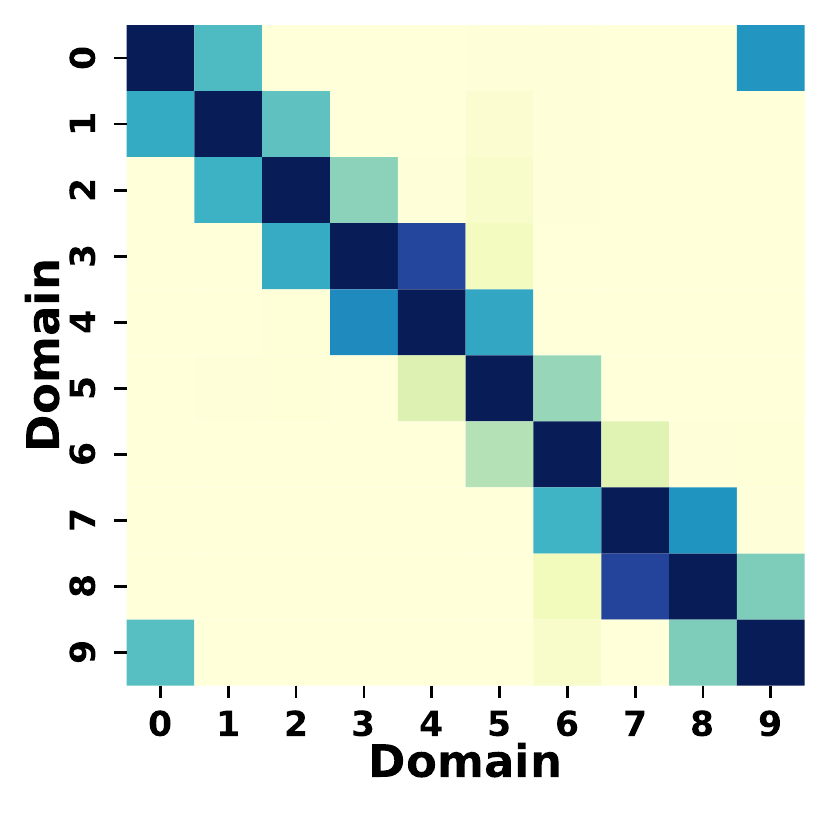}}
    \subfigure[MDMNCFR ]{\label{appfig:MDMNCFR}
    \includegraphics[height=.3\linewidth]{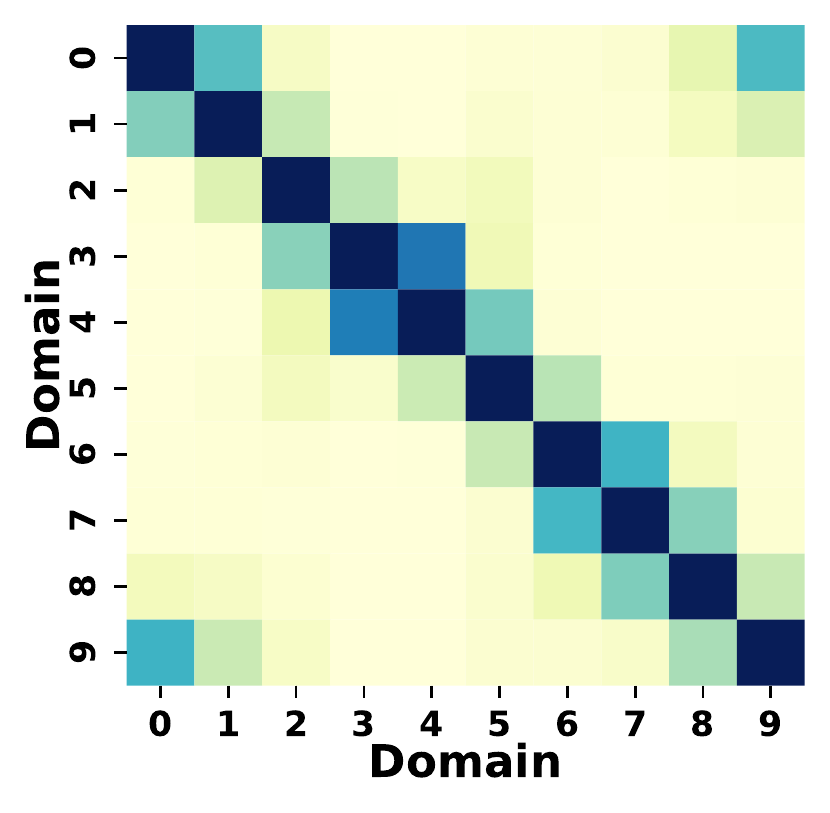}}
    \subfigure[DCFR ]{\label{appfig:dcfr}
    \includegraphics[height=.3\linewidth]{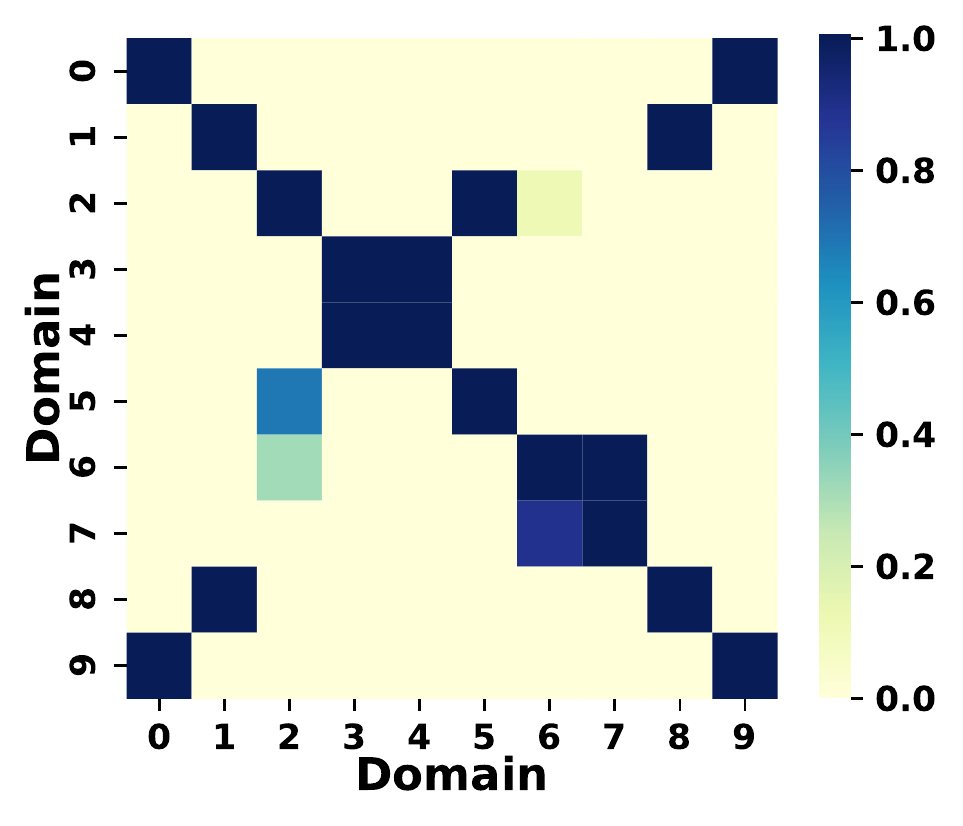}}
\caption{
\label{fig:domain_rela2}
Additional visualizations of domain-level similarities. DCFR in \ref{appfig:dcfr} better maintains the relative closeness across domains than CFR does in Figure \ref{fig:3d}. However, it is still not appropriately capturing the original domain-level similarity as shown in \ref{appfig:MDMN} and \ref{appfig:MDMNCFR}, both of which are backed by the cross-domain adjustment.
}
\end{figure*}

\begin{figure*}[ht]
\centering
 \subfigure[CFR]{\label{fig:tcfr}
    \includegraphics[height=.4\linewidth]{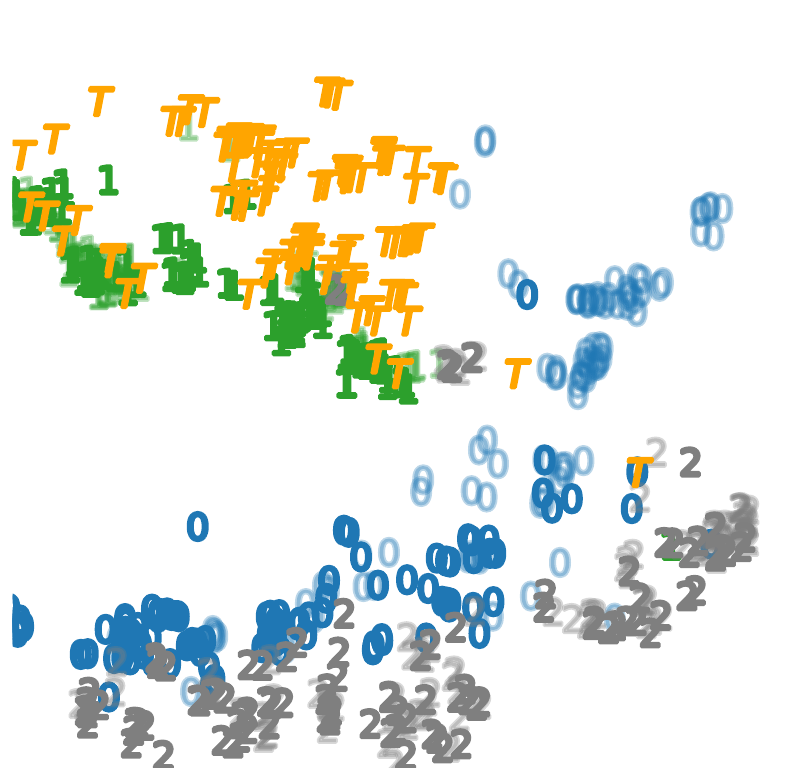}}
    \subfigure[MDMNCFR]{\label{fig:tmdmncfr}
    \includegraphics[height=.4\linewidth]{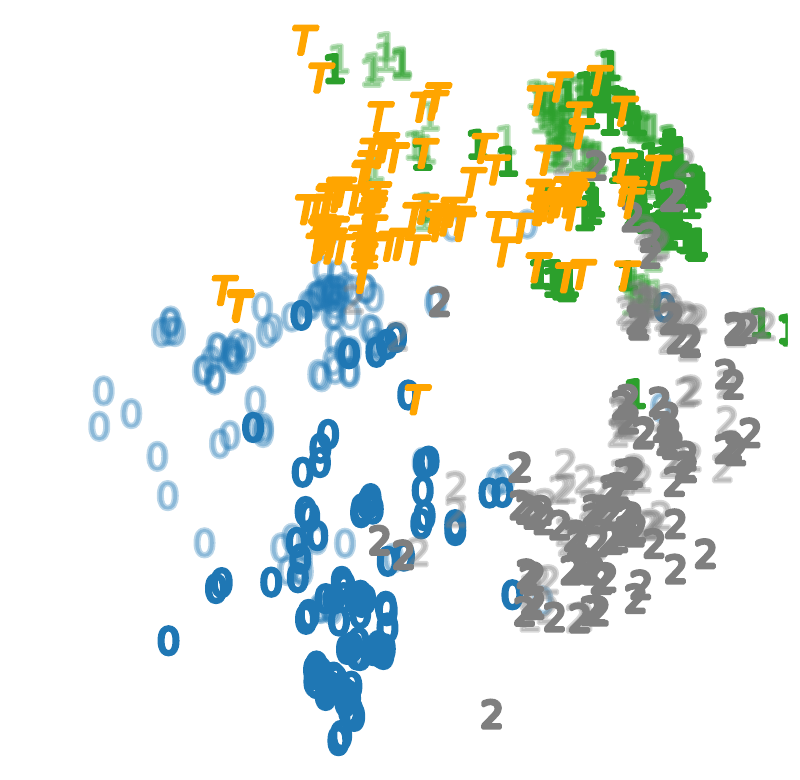}}
\caption{
\label{fig:tsne2} Additional t-SNE plots for learned $\phi(x)$. The domain 9 is chosen as the target domain and is labeled as orange "T". 
For convenience, its 3 nearest neighboring domains are labeled as "0", "1" and "2". They are found through the 
weight matrix in MDMN. Labels with solid shade and transparent shade represent the control group and the treatment group, respectively. }
\end{figure*}

Figure \ref{fig:domain_rela2} visualizes the similarity at domain level for the other 3 methods.
For methods without the domain level adjustment (CFR included), we use a surrogate to calculate 
$\boldsymbol{l}_s$ in \eqref{eq:rawdif}, so that similarity could be represented in a comparable manner. We first use 1-d t-SNE on the learned feature embedding $\phi(x)$, as $\text{tsne}(\phi(x))$. Then we calculate the distance and weight according to \eqref{eq:rawdif2}, 
\begin{equation}
\label{eq:rawdif2}
\begin{split}
&\boldsymbol{l}_s=\{l_{s,i}\}_{i\ne s},~
l_{s,i}=\mathbb{E}_{x \sim {D}_{s}}[\text{tsne}(\phi(x))]-\mathbb{E}_{x \sim {D}_{i}}[\text{tsne}(\phi(x))],\\
&\boldsymbol{w}_s=\{w_{s,i}\}_{i\ne s}=\text{softmax}(-\boldsymbol{l}_s).
\end{split}
\end{equation}
Diagonal elements are set to 1 as reference values. DCFR visualized in Figure \ref{fig:domain_rela2} better maintains the original domain-level connectivity than CFR in Figure \ref{fig:domain_rela}. It connect such as domains 3 and 4 or domains 6 and 7.
MDMN and MDMNCFR both have the cross-domain component to learn domain similarities. They also clearly reflect the original data distributions by showing strong connectivity to two neighboring domains.

Figure  \ref{fig:tsne2} includes two additional t-SNE plots of the learned feature $\phi(x)$ in mouse data. In \ref{fig:tcfr}, the target domain is distant to the neighboring domains "0" and "2", which gives the model fewer relevant points to learn from to predict on the target domain.
In \ref{fig:tmdmncfr}, the treatment group and control group are still less balanced compared to that in Figure \ref{fig:tmdcn}, this suggests worse generalization from our bound in Theorem \ref{thm:bound}.

\end{document}